\documentclass{article}

\usepackage{microtype}
\usepackage{graphicx}
\usepackage{subcaption}
\usepackage{amsmath}
\usepackage{booktabs} 
\usepackage{enumitem}

\usepackage{hyperref}

\DeclareMathOperator{\dist}{dist}
\DeclareMathOperator{\Disc}{Discrepancy}

\newcommand{\R}{\mathbb{R}}

\newcommand{\Wone}{\mathsf W_1}

\newcommand{\T}{T}

\usepackage[accepted]{icml2026}

\usepackage{amsmath}
\usepackage{amssymb}
\usepackage{mathtools}
\usepackage{amsthm}

\usepackage[capitalize,noabbrev]{cleveref}

\theoremstyle{plain}
\newtheorem{theorem}{Theorem}[section]
\newtheorem{proposition}[theorem]{Proposition}
\newtheorem{lemma}[theorem]{Lemma}
\newtheorem{corollary}[theorem]{Corollary}
\theoremstyle{definition}
\newtheorem{definition}[theorem]{Definition}
\newtheorem{assumption}[theorem]{Assumption}
\theoremstyle{remark}
\newtheorem{remark}[theorem]{Remark}

\usepackage[textsize=tiny]{todonotes}

\icmltitlerunning{When Do Graph Foundation Models Transfer? A Data-Centric Theory}

\begin{document}

\twocolumn[
  \icmltitle{When Do Graph Foundation Models Transfer? A Data-Centric Theory}



  \icmlsetsymbol{equal}{*}

  \begin{icmlauthorlist}
    \icmlauthor{Jiajun Zhu}{ut}
    \icmlauthor{Ying Chen}{asu}
    \icmlauthor{Peihao Wang}{ut}
    \icmlauthor{Yixuan He}{asu}
    \icmlauthor{Pan Li}{gt}
    \icmlauthor{Aditya Akella}{ut}
    \icmlauthor{Zhangyang Wang}{ut}
  \end{icmlauthorlist}

  \icmlaffiliation{ut}{University of Texas at Austin}
  \icmlaffiliation{asu}{Arizona State University}
  \icmlaffiliation{gt}{Georgia Institute of Technology}

  \icmlcorrespondingauthor{Atlas Wang}{atlaswang@utexas.edu}
  \icmlcorrespondingauthor{Pan Li}{panli@gatech.edu}

  \icmlkeywords{Machine Learning, ICML}

  \vskip 0.3in
]



\printAffiliationsAndNotice{}  



\begin{abstract}

Graph foundation models (GFMs) aim to reuse a single backbone across diverse graph domains, yet their transfer is often uneven and can exhibit negative transfer. While most prior work improves transfer through architectural or adaptation choices, we ask a data-centric question: \emph{which properties of two graph domains determine how much a fixed representation model changes its outputs?} Using a graphon-based continuous limit for dense graphs, we show that for both set-based and message-passing tokenizations, any Lipschitz backbone admits an explicit decomposition of cross-domain output shift into (i) graph-specific finite-sample approximation terms and (ii) an intrinsic, relabeling-invariant domain discrepancy capturing structural mismatch. A key ingredient is positional-encoding (PE) stability: we establish stability guarantees for spectral PEs and highlight contrasting behaviors of eigenvector- versus subspace-based PEs. Experiments on synthetic and real graphs validate the theory and translate the decomposition into guidance for data curation in GFM transfer.

\end{abstract}

\section{Introduction}

Graph-structured data underpins applications from scientific discovery and cybersecurity to recommender systems and biology~\cite{gilmer2017neural, akoglu2015graph, ying2018graph, gaudelet2021utilizing}. Yet practical graph learning often faces a recurring \emph{model–data–task pipeline} problem: each new domain can differ in semantics, features, structure, and labels, requiring substantial redesign of preprocessing, architectures, and objectives. \emph{Graph foundation models} (GFMs)~\cite{mao2024position} aim to reduce this burden by pretraining reusable backbones on large, diverse graph corpora, echoing the success of foundation models in language.

Despite growing interest, the promise of ``one model for all graphs'' remains far less clear in GFMs than in language,
largely due to extreme \emph{domain heterogeneity}: across domains, graphs differ in feature distributions, sizes,
structural motifs, and appropriate inductive biases, often leading to \emph{negative transfer}.
Recent work tackles this challenge mainly via \emph{model-centric} strategies---scalable training and transfer-enhancing
interfaces such as MoE routing, transferable token vocabularies, and prompt-based adaptation
\cite{sun2023all,wang2024gft,xia2024anygraph,liuone,wang2025generalization}---highlighting how architecture, tokenization,
and adaptation mechanisms shape cross-domain transfer.

However, a core theoretical gap remains largely unaddressed: we still lack principled tools to \emph{characterize how hard
it is to transfer between two graph domains, independent of model choice}. In particular, given a fixed graph
representation backbone, what properties of two graph data distributions determine how much the model's representations
will change across domains? Without a notion of domain discrepancy tailored to graphs, it is difficult to (i) anticipate
when a GFM will fail under domain shift, (ii) diagnose the sources of negative transfer, or (iii) justify
data curation strategies intended to robustify GFM training.

To address this gap, we take a \emph{data-centric} view and analyze representation shift using graphons as continuous limit
objects for dense graphs~\cite{lovasz2012largenetworks}. Graphons provide a bridge between finite graphs and domain-level
structure, allowing us to separate transfer difficulty into (i) finite-sample approximation error from sampling graphs and
(ii) an intrinsic, relabeling-invariant domain discrepancy between latent generators. This view aligns with empirical
evidence that many real graph datasets admit low-dimensional, domain-level structure, e.g., via mixtures of latent
generators~\cite{maskey2024mixturegraphons,han2022gmixup}, while enabling precise domain-wise guarantees.

\paragraph{Our contributions.}
We develop a data-centric theory and experimental protocol for transfer in graph foundation models. 
(i) For set-based and message-passing tokenizations, we show that any Lipschitz backbone admits an explicit decomposition
of cross-domain shift into graph-specific sampling/approximation terms and an intrinsic
domain-discrepancy term. A key ingredient is spectral PE stability, highlighting distinct behaviors of eigenvector-based
and subspace-based encodings under perturbations.
(ii) We validate these predictions with controlled studies of size shifts, structural perturbations, and spectral-gap
effects.
(iii) We translate the decomposition into practical guidance for evaluation and data curation, clarifying when larger-graph
augmentation helps and when it does not.

\section{Related Work}

\paragraph{Graph Generalization Theory.}
A growing line of work studies generalization and transfer in graph learning through \emph{graphon} models, which provide
continuous limit objects for dense random graphs. Graphons have been used to formalize convergence/stability of graph
convolutions on large random graphs~\cite{keriven2020convergence} and to analyze GNN transferability by relating finite
graphs to underlying graphon operators~\cite{ruiz2020graphon,maskey2023transferability}, with recent extensions to
mixtures of graphons to model domain heterogeneity~\cite{maskey2024mixturegraphons}. 
Building on this perspective, we focus on a GFM-motivated, \emph{data-centric} view of cross-domain
representation/output shift: for broad Lipschitz backbones and common tokenizations, rather than proving
generalization for a specific GNN class.

\paragraph{Comparison to Distributional Discrepancies.}
Classical distances such as Maximum Mean Discrepancy (MMD)~\cite{gretton2012kernel} and the Gromov--Wasserstein (GW) distance~\cite{memoli2011gromov} have also been used to compare graph domains~\cite{xu2019gromov}, but they measure different objects from our graphon discrepancy.
MMD measures mismatch after mapping graphs into a chosen representation space, such as a kernel, feature map, or encoder space, and is therefore tied to that representation and its possible information loss. GW compares sampled finite graphs pairwise, so even graphs drawn from the same underlying generator can have nonzero distance due to sampling variation. In contrast, our graphon-based discrepancy compares domains directly at the level of their graph-generating mechanisms, without requiring an auxiliary embedding. It also separates finite-graph sampling error from the intrinsic mismatch between latent graphons, which is the quantity isolated as $\varepsilon_{\mathrm{gra}}$ in our decomposition.

\paragraph{Graph Positional Encodings.}
Positional encodings (PEs) are a core ingredient in graph Transformers and can substantially enhance message-passing GNNs.
In particular, spectral PEs based on Laplacian eigenvectors are widely adopted and effective in practice
(e.g., spectral attention and related designs~\cite{kreuzer2021rethinking}). At the same time, eigenvector PEs inherit
intrinsic ambiguities---sign flips and basis rotations within repeated or near-repeated eigenspaces---and can become
sensitive to small graph perturbations. Recent work improves robustness by imposing equivariances/invariances or operating
directly on eigenspaces: PEG introduces $O(p)$-equivariant processing with stability guarantees~\cite{wang2022equivariant};
SignNet/BasisNet achieves sign/basis invariance with universality results~\cite{lim2022sign}; and SPE proposes a provably
stable and expressive encoding via eigenvalue-aware soft partitioning~\cite{huang2024stability}. 
Our work is complementary: rather than proposing a new PE, we connect PE stability to cross-domain transfer by showing how
it enters our bounds through an explicit constant $C_{\mathrm{PE}}$.

\section{Preliminaries}

In this work, we study \emph{graph-structure classification}, where labels depend only on the graph structure (e.g.,
adjacency or Laplacian) rather than external node attributes. This naturally yields a two-stage
pipeline: (i) a tokenization step that maps nodes to positional embeddings (PEs) encoding structural information, and
(ii) a backbone that aggregates node tokens into a graph-level prediction.

We analyze both structure-aware message-passing backbones (e.g., GCN~\cite{kipf2017gcn}) and structure-agnostic set-based
backbones (e.g., DeepSets~\cite{deepsets} and Graph Transformers~\cite{yun2019GraphTrans}) in \S\ref{subsec:def_nn}; the
latter can still succeed since structure is injected into tokens via PE. For PEs (\S\ref{subsec:def_pe}), we consider both
eigenvector-based encodings and projector-based alternative.

\label{sec:preliminaries}


\subsection{Notations}
\label{subsec:def_nn}

\paragraph{Graphs and graphons.}
A (weighted) undirected graph is $G=(V,E)$ with $|V|=n$ and adjacency matrix
$A\in\R^{n\times n}$ (symmetric). Throughout, we by default use the normalized
graph operator $\Delta:=A/n$ (so $A=n\Delta$); other shifts (e.g., $A$ or $D^{-1}A$)
can be treated analogously.
A graphon is a symmetric measurable function $W:[0,1]^2\to[0,1]$ with induced bounded self-adjoint
integral operator $(T_W f)(u)=\int_0^1 W(u,v)f(v)\,dv$.

\paragraph{Signals and positional embeddings.}
Node features are $X\in\R^{n\times d_x}$ (row $x_i$ for node $i$). A positional
embedding (PE) is a node-wise map $t_G:[n]\to\R^{d_p}$ (stacked as
$P\in\R^{n\times d_p}$).
On the graphon side, a graphon signal is $x:[0,1]\to\R^{d_x}$ and a graphon PE is
$t_W:[0,1]\to\R^{d_p}$.

\paragraph{Message-passing Backbone.}
We adopt the graph convolutional neural network (GCN), which covers a broad class of
message-passing GNNs. Let $\Delta\in\R^{n\times n}$ be a (symmetric) graph shift operator (GSO) and
let $x_\ell\in\R^{n\times F_\ell}$ denote the layer-$\ell$ feature map (with columns $x_\ell^j\in\R^n$).
An GCN is specified by a collection of filters
$H_\ell=(h_\ell^{jk})_{j\le F_\ell,k\le F_{\ell-1}}$ and a feature-mixing matrix
$M_\ell=(m_\ell^{jk})_{j\le F_\ell,k\le F_{\ell-1}}$, together with an activation $\rho$.
The layer update is
\begin{equation}
\small
\label{eq:scnn_layer}
x_\ell^{j}
=
\rho\!\left(\sum_{k=1}^{F_{\ell-1}} m_\ell^{jk}\, h_\ell^{jk}(\Delta)\big(x_{\ell-1}^{k}\big)\right)
\end{equation}
where $j=1,\dots,F_\ell,\ \ell=1,\dots,L$ and $h_\ell^{jk}(\Delta)$ denotes applying the (spectral) graph filter to the input signal.
We write $\Phi_\Delta$ for the realized mapping on the graph with $\Delta$.

The graphon analogue replaces the GSO $\Delta$ by the graphon shift operator (WSO) $T_W$.
Given a graphon $W$ and feature fields $x_\ell:[0,1]\to\R^{F_\ell}$ (with coordinates $x_\ell^j\in L^2([0,1])$),
the WCN layer is defined by
\begin{equation}
\label{eq:graphon_scnn_layer}
x_\ell^{j}(u)
=
\rho\!\left(\sum_{k=1}^{F_{\ell-1}} m_\ell^{jk}\, h_\ell^{jk}(T_W)\big(x_{\ell-1}^{k}\big)(u)\right)
\end{equation}
where $u\in[0,1] $. Both \eqref{eq:scnn_layer} and \eqref{eq:graphon_scnn_layer} share the same parameters $(H,M,\rho)$.

\paragraph{Set Backbone.}
The set backbone first converts a graph into a multiset/measure of PE-tokens and
then applies a permutation-invariant map. On graphs, a standard DeepSets readout
is
\begin{equation}
\label{eq:deepsets}
F_\theta(G)
=
\rho\Big(\frac1n\sum_{i=1}^n \phi_\theta(t_G(i))\Big),
\end{equation}
where $\phi_\theta$ is a token encoder and $\rho$ is a predictor. On graphons, the
sum becomes an integral over $[0,1]$:
\begin{equation}
\label{eq:graphon_deepsets}
F_\theta(W)
=
\rho\Big(\int_0^1 \phi_\theta(t_W(u))\,du\Big).
\end{equation}
Intuitively, PE provides an explicit graph-to-set (graph-to-measure) interface,
so DeepSets can ignore the adjacency thereafter.




\subsection{Positional Embedding on Graph(on)}
\label{subsec:def_pe}

\begin{definition}[Eigenvector positional encodings (Eig-PEs)]\label{def:eig_pes}
Let $T_W$ admit an eigendecomposition $\{(\lambda_\ell,\phi_\ell)\}_{\ell\ge 1}$ in $L^2([0,1])$, i.e. $T_W\phi_\ell=\lambda_\ell\phi_\ell$, $\|\phi_\ell\|_{L^2}=1$.
We index eigenpairs so that $\lambda_1 \le \lambda_2 \le \cdots \le \lambda_n$.
Fix $k\in\mathbb{N}$. Define the \emph{graphon Eig-PE map}
\[
t_W^{eig}:=(\phi_1,\dots,\phi_k)\in L^2([0,1];\R^k).
\]
For a finite graph $G$ equipped with graph shift operator $\Delta$ with eigenpairs $\{(\lambda_{\ell},\varphi_{\ell})\}_{\ell=1}^n$, indexed so that $\lambda_1 \le \cdots \le \lambda_n$, where
$\varphi_\ell\in\mathbb{R}^n$ are orthonormal under the scaled inner product
$\langle x,y\rangle_n := \frac{1}{n}x^\top y$,
define the \emph{graph Coord-PE tokens}
\[
t_{G}^{eig}:=(\varphi_{1},\dots,\varphi_{k})\in\R^{n\times k}
\]
and the induced \emph{step} Eig-PE map on $[0,1]$,
\[
St_{G}^{eig}:=(S \varphi_{1},\dots,S \varphi_{k})\in L^2([0,1];\R^k).
\]
\end{definition}

\begin{definition}[Projector positional encodings (Proj-PEs)]
\label{def:proj_pes}
Let $T_W$ admit an eigendecomposition $\{(\lambda_\ell,\phi_\ell)\}_{\ell\ge 1}$ in $L^2([0,1])$,
i.e. $T_W\phi_\ell=\lambda_\ell\phi_\ell$ and $\|\phi_\ell\|_{L^2}=1$.
We index eigenpairs so that $\lambda_1 \le \lambda_2 \le \cdots \le \lambda_n$.
Fix $k\in\mathbb{N}$ and define the rank-$k$ orthogonal projector
\[
\Pi_{W,k}:\ L^2([0,1])\to L^2([0,1]),\;
\Pi_{W,k}f:=\sum_{\ell=1}^k \langle f,\phi_\ell\rangle_{L^2}\,\phi_\ell.
\]
Equivalently, $\Pi_{W,k}$ has kernel
\[
\Pi_{W,k}(u,v):=\sum_{\ell=1}^k \phi_\ell(u)\phi_\ell(v).
\]
Define the graphon Proj-PE map with a $m$-dim readout function $r \in L^2([0,1];\R^m)$, here $\Pi_{W,k}$ is extended to $L^2([0,1]; \R^m)$ channel-wise.
\[
t_W^{\mathrm{proj}}
\;:=\;
\Pi_{W,k}r \ \in\ L^2\!\big([0,1];\,\R^{m}\big).
\]

For a finite graph operator $\Delta\in\mathbb{R}^{n\times n}$ with eigenpairs
$\{(\lambda_\ell,\varphi_\ell)\}_{\ell=1}^n$, indexed so that $\lambda_1 \le \cdots \le \lambda_n$, where
$\varphi_\ell\in\mathbb{R}^n$ are orthonormal under the scaled inner product
$\langle x,y\rangle_n := \frac{1}{n}x^\top y$,
fix $k\le n$ and denote
\[
U_k = (\varphi_1,\ldots,\varphi_k)\in\mathbb{R}^{n\times k},
\;
\Pi_k = \frac{1}{n}U_kU_k^\top\in\mathbb{R}^{n\times n}.
\]
Define the graph Proj-PE tokens with a readout matrix $R = [r_1, \ldots, r_m] \in \R^{n \times m}$:
\[
t_G^{\mathrm{proj}}
\;:=\;
\Pi_k R
\ \in\ \mathbb{R}^{n\times m}.
\]
and the induced step Proj-PE map on $[0,1]$ with readout $R$ by
\[
St^{proj}_G := S(\Pi_kR) \in L^2([0,1];\R^m).
\]
For the induced step-graphon $W_\Delta$, we take the default readout function as
$r := SR$.
\end{definition}



\section{Theoretical Results}

In this section, we develop a unified operator-level framework to quantify \emph{transferability} across graphs of different sizes and domains. Our starting point is to embed a finite graph into a continuous object via an induced step-graphon, which enables size-independent comparison in a common function space. We then establish (i) explicit high-probability \emph{graph-to-graphon} concentration rates, and
(ii) \emph{graphon-to-graphon} stability up to measure-preserving relabelings, defining an intrinsic domain discrepancy. These results culminate in an explicit decomposition of model output gaps into sampling errors and an intrinsic domain discrepancy term, providing actionable insight: improving cross-graph generalization amounts to simultaneously reducing the finite-sample approximation error and the latent graphon mismatch.
Unless stated otherwise, all main proofs are provided in Appendix~\ref{sec:appendix_proofs}.

\label{sec:theoretical_results}

\subsection{Graphs are Step-Graphons}
\label{subsec:step_graphon}
The first open question for graph foundation models is how to define the distance of graphs with different sizes. In this subsection, we claim that graphs are equivalent to step-graphon w.r.t model output, thus distance between graphs with different sizes can be defined in the same graphon space.

\begin{definition}[Induced step-graphon from a finite graph operator]\label{def:induced_graphon}
Fix $n\in\mathbb{N}$ and the partition $P_j=[(j-1)/n,j/n)$ for $j=1,\dots,n$.
Given a finite (weighted) symmetric graph shift/operator $\Delta\in\R^{n\times n}$, define the induced self-adjoint step graphon (here $n\Delta$ equals adjacency matrix $A$)
\[
W_\Delta(u,v):=\sum_{i=1}^n\sum_{j=1}^n {n}\Delta_{ij}\,\mathbf{1}_{P_i}(u)\mathbf{1}_{P_j}(v),
\]
and the induced operator $T_{W_\Delta}$. For a node vector $x\in\R^n$, define the induced (step) graphon signal
\[
(S x)(u):=\sum_{j=1}^n x_j\,\mathbf{1}_{P_j}(u).
\]
If $\pi$ is a permutation of nodes, we write $\pi(\Delta)$ for the permuted operator and $W_{\pi(\Delta)}$ for the induced step graphon.
\end{definition}

The equivalence between finite graphs and induced step-graphons is immediate for backbone transformations: the discrete
(neighborhood) averaging on graphs coincides with integration on the corresponding step-graphon. Likewise, the
correspondence between graph PEs and induced step-graphon PEs is natural. We defer the formal statements and proofs to
Appendix~\ref{app:graph_step_graphon}.

\subsection{Large Graphs are Closer to Graphon}
\label{sec:large_graphs_closer_graphon}

In many real-world scenarios~\cite{han2022gmixup}, graphs with the same label can be viewed as
generated from a shared underlying graphon. A central question for size-shift generalization is:
\emph{how does the graph--graphon discrepancy decay as the graph size $n$ increases?}
We show that under evaluation sampling, larger graphs induce step-graphons that are
provably closer to the underlying graphon in operator norm.

\paragraph{Sampling Model.}
Let $W:[0,1]^2\to[0,1]$ be a symmetric graphon with integral operator $T_W$.
Sample $u_1,\dots,u_n \stackrel{iid}{\sim} \mathrm{Unif}(0,1)$ and set
$A_{ij}:=W(u_i,u_j)$. Let $u_{(1)}\le\cdots\le u_{(n)}$ be the order statistics and
$\pi_u$ be the permutation satisfying $u_{\pi_u(i)}=u_{(i)}$.
With the canonical partition $I_i:=((i-1)/n,i/n]$, define the step-graphon
\[
W_{\pi_u(\Delta)}(x,y):=W(u_{(i)},u_{(j)})\qquad \text{for }(x,y)\in I_i\times I_j.
\]

\begin{assumption}[Lipschitz graphon]
\label{ass:lipschitz_graphon_main}
There exists $L<\infty$ such that for all $(x,y),(x',y')\in [0,1]^2$,
\[
|W(x,y)-W(x',y')|\le L\bigl(|x-x'|+|y-y'|\bigr).
\]
\end{assumption}

\begin{theorem}[Operator-norm convergence]
\label{thm:op_rate_eval_sampling_main}
Under Assumption~\ref{ass:lipschitz_graphon_main}, there exists a universal constant $C$ such that
for any $\delta\in(0,1)$, with probability at least $1-\delta$,
\[
\varepsilon_n
=\bigl\|T_{W_{\pi_u(\Delta)}}-T_W\bigr\|_{L^2\to L^2}
\ \le\ C L\Bigl(\sqrt{\tfrac{\log(2/\delta)}{n}}+\tfrac{1}{n}\Bigr).
\]
In particular, taking $\delta=n^{-c}$ yields
$\varepsilon_n = O_p\!\big(\sqrt{\tfrac{\log n}{n}}\big)$.
\end{theorem}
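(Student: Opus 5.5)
Because both $T_{W_{\pi_u(\Delta)}}$ and $T_W$ are integral operators with bounded kernels, I will bound the operator norm by the Hilbert--Schmidt norm, $\|T_{W_{\pi_u(\Delta)}}-T_W\|_{L^2\to L^2}\le \|W_{\pi_u(\Delta)}-W\|_{L^2([0,1]^2)}$, and will in fact control the sup norm of the kernel difference. For $(x,y)\in I_i\times I_j$ the Lipschitz assumption (Assumption~\ref{ass:lipschitz_graphon_main}) gives
\[
|W(u_{(i)},u_{(j)})-W(x,y)|\le L\bigl(|u_{(i)}-x|+|u_{(j)}-y|\bigr)\le L\bigl(|u_{(i)}-\tfrac{i}{n}|+|u_{(j)}-\tfrac{j}{n}|+\tfrac{2}{n}\bigr),
\]
using $|x-i/n|\le 1/n$ on $I_i$. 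The problem thus reduces to a uniform bound on how far the order statistics deviate from their grid positions.

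For that deviation I will use the empirical CDF $F_n$ of $u_1,\dots,u_n$. Since $F_n(u_{(i)})=i/n$ almost surely (no ties), we get $|u_{(i)}-i/n|=|u_{(i)}-F_n(u_{(i)})|\le \sup_t|F_n(t)-t|$. The Dvoretzky--Kiefer--Wolfowitz inequality with Massart's constant then gives $\sup_t|F_n(t)-t|\le\sqrt{\log(2/\delta)/(2n)}$ with probability at least $1-\delta$. Plugging this in, on that event $\|W_{\pi_u(\Delta)}-W\|_\infty\le 2L\sqrt{\log(2/\delta)/(2n)}+2L/n$, and hence $\varepsilon_n\le CL(\sqrt{\log(2/\delta)/n}+1/n)$ with, for example, $C=2$.

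For the second claim, I set $\delta=n^{-c}$, so that $\log(2/\delta)=\log 2+c\log n$. The bound becomes $O(\sqrt{\log n/n})$ with probability $1-n^{-c}\to1$, which is the $O_p$ statement.

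The main obstacle is the uniform control of the order statistics, and the DKW inequality handles it cleanly. If one prefers to avoid DKW, the fallback is a union bound over $i$ with Beta-distribution tail bounds. That route costs a $\log n$ inside the square root, which is still consistent with the $\delta=n^{-c}$ conclusion but not with the stated $\log(2/\delta)$ form. I would also check the boundary convention, since the sampling model uses the half-open intervals $I_i=((i-1)/n,i/n]$ while Definition~\ref{def:induced_graphon} uses $P_j$; the choice only affects a null set, so it does not matter.
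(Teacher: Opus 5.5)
Your proof is correct and takes the paper's route: you bound the operator norm by the Hilbert--Schmidt norm, use the Lipschitz assumption to reduce to the deviations $|u_{(i)}-i/n|$ of the order statistics from the grid, and control these with the DKW inequality (Massart's constant). The differences are cosmetic. You bound the kernel difference in sup norm instead of integrating the squared one-dimensional quantization error, and your identity $F_n(u_{(i)})=i/n$ gives $\max_i|u_{(i)}-i/n|\le\sup_t|F_n(t)-t|$ directly, which is slightly sharper than the paper's bound carrying an extra $1/n$.
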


Theorem~\ref{thm:op_rate_eval_sampling_main} formalizes our claim:
\emph{as $n$ grows, the sampled (step) graph operator approaches the underlying graphon operator},
so larger graphs provide a more faithful proxy of the graphon than smaller graphs.


\subsection{Model Transferability Between Graphs}
\label{subsec:transferability}

In \S\ref{subsec:step_graphon}, we embed any finite graph operator into a step-graphon operator, enabling comparison across sizes in a common continuous space. In \S\ref{sec:large_graphs_closer_graphon}, we show that the sampled-graph operator concentrates around its latent graphon operator, with discrepancy shrinking as graph size grows. Together, these results suggest viewing each observed graph as a finite-sample proxy of a domain graphon.

We then ask: \emph{for two graphs of possibly different sizes and domains, how much can a fixed model’s outputs differ?}
Our approach compares graphs via their latent graphons and decomposes the output gap into a domain-mismatch term plus two
sampling terms. For $G_1\sim W_1$ and $G_2\sim W_2$, any Lipschitz backbone with spectral PE tokens admits an informal bound
of this form.

\begin{theorem}[Graph-to-graph model transferability, informal]
Let $G_1, G_2$ be graphs sampled from graphons $W_1, W_2$. Under mild spectral-gap
assumptions, the output discrepancy of a Lipschitz neural network satisfies
\[
\text{(output gap)} \;\lesssim\;
\underbrace{\varepsilon_1}_{\text{sampling}_1}
+\underbrace{\varepsilon_{\mathrm{gra}}}_{\text{graphon mismatch}}
+\underbrace{\varepsilon_2}_{\text{sampling}_2}.
\]
\end{theorem}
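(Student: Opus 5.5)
The plan is a triangle inequality through the step-graphon embedding of \S\ref{subsec:step_graphon}, chaining along $G_1 \to W_1 \to W_2 \to G_2$. By the graph/step-graphon equivalence (Appendix~\ref{app:graph_step_graphon}), the output of either backbone on $G_i$ equals its output on the induced step-graphon $W_{\pi_i(\Delta_i)}$ with step PE $St_{G_i}$. Both the DeepSets readout \eqref{eq:deepsets} and the WCN \eqref{eq:graphon_scnn_layer} are permutation/relabeling invariant, so for any measure-preserving bijection $\sigma$ we have $F(W_2)=F(W_2^\sigma)$. We may therefore write
\[
|F(G_1)-F(G_2)| \le |F(W_{\pi_1(\Delta_1)})-F(W_1)| + \inf_{\sigma}|F(W_1)-F(W_2^\sigma)| + |F(W_2)-F(W_{\pi_2(\Delta_2)})|.
\]
We then define $\varepsilon_{\mathrm{gra}}$ as the relabeling-invariant quantity $\inf_\sigma\|T_{W_1}-T_{W_2^\sigma}\|$ (a cut/operator-type distance), and $\varepsilon_i$ as the sampling terms $\|T_{W_{\pi_i(\Delta_i)}}-T_{W_i}\|$.

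It remains to show that each pairwise output gap is bounded by a constant times the operator-norm distance of the two graphons. This is done separately for the two backbones.

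\emph{Message passing.} Each layer is Lipschitz in the operator. For polynomial or Lipschitz spectral filters, $\|h(T)-h(T')\|\le C_h\|T-T'\|$. Combining this with the Lipschitz activation $\rho$ and the bounded mixing weights $M_\ell$, induction over the $L$ layers gives
\[
\|\Phi_T x-\Phi_{T'}x\|\le C_{\mathrm{net}}\bigl(\|T-T'\|\,\|x\|+\|x-x'\|\bigr).
\]
The input signal is itself a PE, so the input-difference term is again controlled by PE stability.

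\emph{Set backbone.} If $\rho$ and $\phi_\theta$ are Lipschitz, then by Jensen/Cauchy--Schwarz
\[
|F(W)-F(W')|\le L_\rho L_\phi\,\|t_W-t_{W'}\|_{L^2}.
\]
So in both cases everything reduces to a PE stability bound $\|t_W-t_{W'}\|\le C_{\mathrm{PE}}\|T_W-T_{W'}\|$.

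\emph{PE stability.} For Proj-PE, the Davis--Kahan $\sin\Theta$ theorem for self-adjoint compact operators gives $\|\Pi_{W,k}-\Pi_{W',k}\|\le \|T_W-T_{W'}\|/\mathrm{gap}_k$, where $\mathrm{gap}_k$ separates the first $k$ eigenvalues from the rest. Hence $\|\Pi_{W,k}r-\Pi_{W',k}r'\|\le \|r\|/\mathrm{gap}_k\cdot\|T-T'\|+\|r-r'\|$, which is the ``mild spectral-gap assumption.'' For Eig-PE, we need gaps $\delta_\ell$ isolating each of the $\lambda_\ell$, $\ell\le k$, together with a sign alignment. Davis--Kahan per eigenvector then gives $\min_{s=\pm1}\|\phi_\ell-s\phi'_\ell\|\le \sqrt2\|T-T'\|/\delta_\ell$. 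Consequently the bound for Eig-PE holds only modulo sign, or requires a sign-invariant $\phi_\theta$; this contrast with Proj-PE is exactly the one the paper highlights. Finally, we plug in Theorem~\ref{thm:op_rate_eval_sampling_main} to get $\varepsilon_i=O_p(\sqrt{\log n_i/n_i})$, with a union bound over the two graphs.

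\emph{Expected main obstacle.} The delicate step is the middle term. Relabeling $W_2$ by $\sigma$ transforms its eigenfunctions and the readout $r$ by composition with $\sigma$, so the PE and readout must be shown to be equivariant; otherwise $\|r-r\circ\sigma\|$ appears. Handling the readout is the first thing I would address. A related difficulty is that the infimum over $\sigma$ may not be attained, so I would work with near-optimal $\sigma$ and an $\epsilon$-slack, and argue that the spectral gaps of $W_{\pi_i(\Delta_i)}$ remain at least half those of $W_i$ once $\varepsilon_i<\mathrm{gap}/2$ (by Weyl's inequality). That gap-persistence condition is the high-probability event under which the whole bound holds.
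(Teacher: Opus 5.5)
Your proposal is correct and follows essentially the same route as the paper. The paper uses a triangle inequality $G_1\to W_1\to W_2\to G_2$ made of two graph-to-graphon steps (Theorem~\ref{thm:nn_pe_stability}) and one graphon-to-graphon step (Lemma~\ref{lem:nn_pe_stability_graphon_graphon}). Each step is reduced to Lipschitz stability of the backbone plus Davis--Kahan PE stability, and the sampling terms come from Theorem~\ref{thm:op_rate_eval_sampling_main}. The paper's formal version simply assumes the three PE bounds, with a uniform $C_{\mathrm{PE}}$, a fixed bijection $\pi$, and a chosen sign convention. So the sign and readout issues you raise are hypotheses there rather than resolved steps. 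Your $\inf_\sigma$ definition of $\varepsilon_{\mathrm{gra}}$ and the Weyl gap-persistence event make explicit what the paper leaves implicit.

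Three small points need fixing:
\begin{enumerate}
\item Mere Lipschitz continuity of a filter $h$ does not give $\|h(T)-h(T')\|\le C_h\|T-T'\|$ on $L^2([0,1])$; for example, $h(x)=|x|$ is Lipschitz but not operator Lipschitz. Restrict to polynomial filters, or use the paper's condition that $h\in C^1$ with Lipschitz derivative (Assumption~\ref{ass:lipschitz_gnn}(c)).
\item The WCN is relabeling-equivariant, not invariant: $\Phi_{W^\sigma}(x\circ\sigma)=\Phi_W(x)\circ\sigma$. So the node-level MP gap must be measured after composing with $\sigma$ (as in part (b) of the paper's formal theorem), or after a pooling readout.
\item Your readout obstacle disappears for intrinsic readouts. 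The choice $r\equiv\mathbf 1$ is trivially equivariant. The degree function $r_W=T_W\mathbf 1$ is also equivariant and satisfies $\|r_W-r_{W'}\|_{L^2}\le\|T_W-T_{W'}\|$, so the extra $\|r-r'\|$ term is absorbed into $\varepsilon$.
\end{enumerate}
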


To formalize this decomposition, we combine two ingredients: (i) stability of the backbone to perturbations in its PE-token input, and (ii) stability of the positional encodings under operator perturbations. This yields our main graph-to-graph transferability theorem (Theorem~\ref{thm:graph_graph_decomposition}). Since our emphasis is data-centric, we do not track Lipschitz constants in network parameters; the corresponding formal setup is deferred to Appendix~\ref{app:lipschitz_nn}.

\paragraph{Simplified notation.}
To simplify the presentation, we denote by
$f_\theta(W,x)$ a message-passing backbone and by $f_\theta(x)$
a set backbone, where $W$ enters only through its graphon
shift operator $T_W$ (cf.\ \eqref{eq:graphon_scnn_layer}) and
$x:[0,1]\!\to\!\R^d$ is the input token/feature (in our setting, $x=t_W$ from \S\ref{subsec:def_pe}).
Under uniformly bounded parameters (mixing matrices and filters) and $1$-Lipschitz nonlinearities,
both maps are globally Lipschitz in the $L^2$ geometry; we use a single constant $L_\theta$
to summarize this dependence in \S\ref{subsubsec:graphon_graphon_trans}. This matches the earlier notation:
$f_\theta(W,x)$ corresponds to $\Phi_W(x)$ (and $\Phi_\Delta$ on graphs), while $f_\theta(x)$
corresponds to the DeepSets readout $F_\theta$.

\begin{proposition}[Stability of Lipschitz backbones, informal]
\label{prop:lipschitz_stability_informal}
Let $W_1,W_2$ be two graphons with bounded self-adjoint operators
$T_{W_1},T_{W_2}:L^2([0,1])\to L^2([0,1])$, and let
$x_1,x_2\in L^2([0,1];\R^{d})$ be (normalized) token functions.
Assume all backbone parameters (including feature-mixing matrices and spectral/message-passing
filters) are uniformly bounded in operator norm and all nonlinearities are $1$-Lipschitz.
Then there exists a constant $L<\infty$ (depending only on these uniform bounds and the
network depth) such that
\begin{align*}
\small
\|f_{\mathrm{mp}}(W_1,x_1)-f_{\mathrm{mp}}(W_2,x_2)\|
&\le
L\Big(\|x_1-x_2\|_{L^2}+\\&\|T_{W_1}-T_{W_2}\|_{L^2\to L^2}\Big),\\
\|f_{\mathrm{set}}(x_1)-f_{\mathrm{set}}(x_2)\|
&\le
L\,\|x_1-x_2\|_{L^2}.
\end{align*}
\end{proposition}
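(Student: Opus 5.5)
The plan is a layer-by-layer induction on the message-passing backbone, paired with a direct composition argument for the set backbone. Throughout, the $\R^{d}$-valued fields are measured in $L^2([0,1];\R^d)$, and operators act channel-wise.

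\emph{Set backbone.} Write $f_{\mathrm{set}}(x)=\rho\big(\int_0^1\phi_\theta(x(u))\,du\big)$, with $\rho$ and $\phi_\theta$ Lipschitz with constants $L_\rho$ and $L_\phi$. For example, $\phi_\theta$ can be an MLP with bounded weights and $1$-Lipschitz activations, so $L_\phi$ is the product of the weight norms. Then
\[
\Big\|\int_0^1\big(\phi_\theta(x_1(u))-\phi_\theta(x_2(u))\big)du\Big\|
\le L_\phi\int_0^1\|x_1(u)-x_2(u)\|\,du\le L_\phi\|x_1-x_2\|_{L^2},
\]
where the last step is Jensen (or Cauchy--Schwarz) on the probability space $[0,1]$. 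Composing with $\rho$ gives the claim with $L=L_\rho L_\phi$.

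\emph{Message-passing backbone.} Let $x_\ell^{(i)}$ denote the layer-$\ell$ field on $W_i$, and set $e_\ell:=\|x_\ell^{(1)}-x_\ell^{(2)}\|_{L^2}$ and $\delta:=\|T_{W_1}-T_{W_2}\|$. Two uniform facts are needed.
\begin{enumerate}
\item Each filter satisfies $\|h(T)\|\le B_h$ for all $\|T\|\le 1$; this holds because graphon operators have norm at most $1$.
\item Each filter satisfies the operator-Lipschitz bound $\|h(T_1)-h(T_2)\|\le C_h\|T_1-T_2\|$.
\end{enumerate}
For polynomial filters $h(T)=\sum_k a_kT^k$, fact 2 follows from telescoping $T_1^k-T_2^k=\sum_{j}T_1^{j}(T_1-T_2)T_2^{k-1-j}$, which yields $C_h\le\sum_k k|a_k|$. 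For general spectral filters, I would take operator-Lipschitz as part of the ``uniformly bounded filters'' assumption (e.g.\ via the Lipschitz-filter hypothesis of Appendix~\ref{app:lipschitz_nn}).

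Since $\rho$ is $1$-Lipschitz pointwise, it is $1$-Lipschitz on $L^2$. Using the split
\[
h(T_1)x^{(1)}-h(T_2)x^{(2)}=h(T_1)\big(x^{(1)}-x^{(2)}\big)+\big(h(T_1)-h(T_2)\big)x^{(2)},
\]
one obtains
\[
e_\ell\le \|M_\ell\|\Big(B\,e_{\ell-1}+C\,\delta\,\|x_{\ell-1}^{(2)}\|_{L^2}\Big).
\]

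The main obstacle is the factor $\|x_{\ell-1}^{(2)}\|_{L^2}$: the bound is linear in $\delta$ only if hidden features stay bounded. I would handle this with a second induction. Assume $\rho(0)=0$, as holds for ReLU and tanh; otherwise add a constant term. Then $\|x_\ell^{(2)}\|\le \|M_\ell\|B\,\|x_{\ell-1}^{(2)}\|$, so $\|x_\ell^{(2)}\|\le \prod_{j\le \ell}(\|M_j\|B)\,\|x_2\|_{L^2}$. This is bounded by a constant because the tokens are normalized, for instance when Eig-PEs have orthonormal columns, making $\|x_2\|_{L^2}=\sqrt{k}$. Unrolling the recursion from $e_0=\|x_1-x_2\|_{L^2}$ then gives
\[
e_L\le L\big(\|x_1-x_2\|_{L^2}+\delta\big),
\]
with $L$ depending only on depth, $\sup_\ell\|M_\ell\|$, $B$, $C$, and the token normalization. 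Any final graph-level readout (e.g.\ the integral over $u$) is $1$-Lipschitz by the same Jensen step as in the set-backbone case.
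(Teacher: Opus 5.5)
Your proposal is correct and follows essentially the same route as the paper. The message-passing part matches the proof of Proposition~\ref{prop:lipschitz_gnn_output}: a uniform bound on hidden-feature norms, then the recursion from splitting $h(T_1)x^{(1)}-h(T_2)x^{(2)}$ and unrolling, while your set-backbone Jensen step is the same synchronous coupling $U\mapsto(x_1(U),x_2(U))$ the paper uses to bound $\mathsf W_1$ before applying Proposition~\ref{prop:lipschitz_deepsets_output} (and your explicit $\rho(0)=0$ and operator-Lipschitz filter hypotheses are ones the paper also needs, using them implicitly in its Step~1 and via its cited functional-calculus bound).
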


\subsubsection{Stability of PE}
\label{subsec:stability_pe}

\begin{assumption}[For Eig-PE]
\label{ass:eig_pe_stability}
Fix $k\in\mathbb{N}$. Let $W$ be a graphon with operator $T_W$.
Let $\sigma(T_W)$ be the spectrum of $T_W$, and define
$\dist(a,S):=\inf_{s\in S}|a-s|$ for $a\in\R$, $S\subseteq\R$.
Assume the first $k$ eigenvalues are simple and separated:
\[
\gamma_\ell := \dist\!\bigl(\lambda_\ell,\ \sigma(T_W)\setminus\{\lambda_\ell\}\bigr)
\ge\gamma>2\varepsilon>0,\, \ell=1,\dots,k
\]
where $\varepsilon$ is the operator perturbation level.
\end{assumption}

\begin{assumption}[For Proj-PE]
\label{ass:proj_pe_stability}
Fix $k,m\in\mathbb N$. Let $W$ be a graphon with operator $T_W$,
and let $\sigma(T_W)$ denote its spectrum. Define $\dist(a,S):=\inf_{s\in S}|a-s|$.
Assume the rank-$k$ spectral subspace is separated:
\[
\gamma_k
:= \min_{i\le k}\ \dist\!\bigl(\lambda_i,\ \sigma(T_W)\setminus\{\lambda_1,\ldots,\lambda_k\}\bigr)
\ge\gamma>2\varepsilon>0
\]
where $\varepsilon$ denotes the operator perturbation level.
Also assume the readout $r\in L^2([0,1];\R^m)$ used in $t_W^{\mathrm{proj}}=\Pi_{W,k}r$ satisfies
\[
\|r\|_{L^2([0,1];\R^m)}\le B.
\]
\end{assumption}

\begin{lemma}[PE stability]
\label{lem:graphon_graphon_pe_stability}
Fix $k\in\mathbb N$.
Let $W,W'$ be two graphons with bounded self-adjoint operators $T_W,T_{W'}$ on $L^2([0,1])$.
Assume there exists a measure-preserving bijection $\pi:[0,1]\to[0,1]$ such that
\[
\|T_{W^\pi}-T_{W'}\|_{L^2\to L^2}\le \varepsilon,
\qquad
W^\pi(u,v):=W(\pi(u),\pi(v)).
\]
Then, under the corresponding PE regularity assumption (Assumption~\ref{ass:eig_pe_stability} for Eig-PE or ~\ref{ass:proj_pe_stability} for Proj-PE), we have
\[
\|t^{\mathrm{PE}}_{W^\pi}-t^{\mathrm{PE}}_{W'}\|_{L^2([0,1];\mathbb R^{d_{\mathrm{PE}}})}
~\le~ C_{\mathrm{PE}}\,\varepsilon
\]
where $d_{\mathrm{PE}}=k/m$ for Eig/Proj-PE. In particular,
\[
C_{\mathrm{PE}}=
\begin{cases}
C_{\mathrm{eig}}:=\sqrt{k}\max_{\ell\le k} O(1/\gamma_\ell), & \text{(Eig-PE)},\\[2pt]
C_{\mathrm{proj}}:=B\,O(1/\gamma_k), & \text{(Proj-PE)}.
\end{cases}
\]
\end{lemma}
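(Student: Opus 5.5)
The argument reduces both cases to Davis--Kahan / Riesz-projector perturbation theory for bounded self-adjoint operators on $L^2([0,1])$. Relabeling costs nothing. The map $U_\pi f := f\circ\pi$ is unitary because $\pi$ is measure preserving, and $T_{W^\pi} = U_\pi T_W U_\pi^{-1}$. Hence $T_{W^\pi}$ has the same spectrum as $T_W$, with eigenfunctions $\phi_\ell\circ\pi$ and spectral projector $\Pi_{W^\pi,k}=U_\pi\Pi_{W,k}U_\pi^{-1}$. Its gaps $\gamma_\ell,\gamma_k$ are therefore those assumed in Assumption~\ref{ass:eig_pe_stability} or~\ref{ass:proj_pe_stability}. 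Write $A:=T_{W^\pi}$, $A':=T_{W'}$ and $E:=A'-A$, so that $\|E\|\le\varepsilon$.

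\textbf{Step 1 (isolating the spectrum).} By Weyl's inequality for self-adjoint operators (min--max, or $\sigma(A')\subseteq\sigma(A)+[-\varepsilon,\varepsilon]$), each isolated eigenvalue $\lambda_\ell$ of $A$ has exactly one eigenvalue $\lambda'_\ell$ of $A'$ within $\varepsilon$. The rest of $\sigma(A')$ stays at distance at least $\gamma_\ell-\varepsilon>\gamma_\ell/2$ from $\lambda'_\ell$; this is where $\gamma>2\varepsilon$ is used. In particular $\lambda'_\ell$ is simple and the ordering/indexing of the first $k$ eigenpairs is preserved, so $t^{\mathrm{PE}}_{W'}$ is built from the matching eigenpairs. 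Likewise, the cluster $\{\lambda'_1,\dots,\lambda'_k\}$ is separated from the rest of $\sigma(A')$ by more than $\gamma_k-2\varepsilon$. I would state this separation as $\gtrsim \gamma_k$, absorbing constants into the $O(\cdot)$.

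\textbf{Step 2 (Davis--Kahan).} I would apply the $\sin\Theta$ theorem in the form $\|(I-P')P\|\le \|E\|/\delta$, where $P$ and $P'$ are spectral projectors and $\delta$ is the separation between the spectrum of $A$ on $\operatorname{ran}P$ and the spectrum of $A'$ on $\operatorname{ran}(I-P')$. One can derive this directly from the Sylvester identity $A'(I-P')P-(I-P')PA = (I-P')EP$ restricted to the relevant subspaces; it holds verbatim in Hilbert space because only isolated parts of the spectrum are involved. Step 1 gives $\delta\ge\gamma_\ell-\varepsilon\ge\gamma_\ell/2$, so $\|P'_\ell-P_\ell\|\le 2\varepsilon/\gamma_\ell$ for the rank-one projectors and, in the same way, $\|\Pi_{W',k}-\Pi_{W^\pi,k}\|\le 2\varepsilon/\gamma_k$.

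\textbf{Step 3 (Proj-PE).} The bound is immediate: $\|\Pi_{W',k}r-\Pi_{W^\pi,k}r\|_{L^2(\R^m)}\le\|\Pi_{W',k}-\Pi_{W^\pi,k}\|\,\|r\|\le B\cdot 2\varepsilon/\gamma_k$, applied channel-wise. This gives $C_{\mathrm{proj}}=B\,O(1/\gamma_k)$. The only subtlety is that the same readout $r$ is used on both sides. I take this to be part of the statement; if $r$ is graph-dependent, an additional $\|r-r'\|$ term would appear. No sign or basis ambiguity arises here, which is exactly the contrast with Eig-PE.

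\textbf{Step 4 (Eig-PE), the main obstacle.} Eigenvectors are defined only up to sign, so the bound must be read for the sign choice $s_\ell=\operatorname{sign}\langle\phi'_\ell,\phi_\ell\circ\pi\rangle$, or under a sign convention aligned this way. With that choice, for unit vectors $\|\phi'_\ell-s_\ell\phi_\ell\circ\pi\|\le\sqrt2\,\sin\theta_\ell\le\sqrt2\,\|P'_\ell-P_\ell\|\le 2\sqrt2\,\varepsilon/\gamma_\ell$. Summing the squares over the $k$ coordinates gives $\|t^{eig}_{W'}-t^{eig}_{W^\pi}\|\le\sqrt{k}\max_\ell 2\sqrt2\,\varepsilon/\gamma_\ell$, which is $C_{\mathrm{eig}}$. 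I would state the sign alignment explicitly in the proof; it is the reason Eig-PE needs individual gaps $\gamma_\ell$ while Proj-PE needs only the cluster gap $\gamma_k$.
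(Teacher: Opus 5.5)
Your proposal is correct and follows the paper's route. The paper applies Davis--Kahan directly to the pair $(T_{W^\pi},T_{W'})$. For Eig-PE it takes a sign-aligned eigenfunction bound with $C_\ell=O(1/\gamma_\ell)$ and sums it over the $k$ coordinates. For Proj-PE it uses the $\sin\Theta$ projector bound multiplied by $\|r\|\le B$. Your unitary-conjugation step (transferring the gaps of $W$ to $W^\pi$) and your Weyl step supply details the paper leaves implicit.

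One small slip in Step 1: measured from the perturbed eigenvalue or cluster, the gap is only $\gamma-2\varepsilon$, which is not $\gtrsim\gamma$ under $\gamma>2\varepsilon$ alone. This does not matter, because Step 2 correctly uses the unperturbed-to-perturbed separation $\gamma-\varepsilon>\gamma/2$, so the $O(1/\gamma_\ell)$ and $O(1/\gamma_k)$ constants are unaffected.
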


\begin{remark}[Eig-PE vs.\ Proj-PE]
\label{rem:eig_vs_proj_pe_short}
Eig-PE is a compact coordinate map $t_W^{\mathrm{eig}}(u)=(\phi_1(u),\ldots,\phi_k(u))$, but its stability is
gap-sensitive: small eigen-gaps $\gamma_\ell$ can induce large sign/rotation changes of eigenfunctions, making the
coordinates unstable. Proj-PE instead encodes the \emph{top-$k$ spectral subspace} via the projector
$t_W^{\mathrm{proj}}=\Pi_{W,k}r$, which is basis-invariant within $\mathrm{span}\{\phi_1,\ldots,\phi_k\}$ and whose
stability depends only on the subspace gap $\gamma_k$. For notational convenience, we will use the same symbol $k$ to denote the PE output dimension for both variants, although the Proj-PE output dimension is $r$ as defined.
\end{remark}




\subsubsection{Transferability}
\label{subsubsec:graphon_graphon_trans}

\begin{lemma}[Graphon transferability, simplified]
\label{lem:nn_pe_stability_graphon_graphon}
Let $W,W'$ be two graphons with associated operators
$T_W,T_{W'}$. Assume there exists a measure-preserving bijection
$\pi:[0,1]\to[0,1]$ such that
\[
\|T_{W^\pi}-T_{W'}\|_{L^2\to L^2}\le \varepsilon,
\qquad
W^\pi(u,v):=W(\pi(u),\pi(v)).
\]
Let $t_W,t_{W'}$ be PE token maps (same PE type) satisfy 
\[
\|t_{W^\pi}-t_{W'}\|_{L^2}\le C_{\mathrm{PE}}\,\varepsilon.
\]
Let $f_\theta(W,x)$ denote message-passing (MP) backbone and $f_\theta(x)$ set-based backbone with Lipschitz constant $L_\theta$. Then:
\begin{align*}
\text{(Set)}\quad
\bigl\|f_\theta(t_{W^\pi})-f_\theta(t_{W'})\bigr\|
&\le
L_\theta\, C_{\mathrm{PE}}\,\varepsilon,\\
\text{(MP)}\quad
\bigl\|f_\theta(W^\pi,t_{W^\pi})-f_\theta(W',t_{W'})\bigr\|
&\le
L_\theta\,(1+C_{\mathrm{PE}})\,\varepsilon.
\end{align*}
\end{lemma}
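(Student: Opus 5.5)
This is essentially a composition of the backbone Lipschitz bound (Proposition~\ref{prop:lipschitz_stability_informal}) with the assumed PE-token bound. I will apply Proposition~\ref{prop:lipschitz_stability_informal} to the pair of graphons $(W^\pi, W')$, not $(W,W')$. The first check is that $W^\pi$ is again a graphon with a bounded self-adjoint operator. This holds because $\pi$ is a measure-preserving bijection. Indeed, $T_{W^\pi}=P_\pi^{-1} T_W P_\pi$ with $(P_\pi f)(u)=f(\pi(u))$, and $P_\pi$ is unitary on $L^2([0,1])$. So the hypotheses of the proposition hold, with the same uniform parameter bounds. I take $L_\theta$ to be the constant $L$ provided there.

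\textbf{Set case.} The set backbone depends only on its token input. Applying the second inequality of Proposition~\ref{prop:lipschitz_stability_informal} with $x_1=t_{W^\pi}$ and $x_2=t_{W'}$ gives
\[
\|f_\theta(t_{W^\pi})-f_\theta(t_{W'})\|\le L_\theta\|t_{W^\pi}-t_{W'}\|_{L^2}.
\]
Substituting the hypothesis $\|t_{W^\pi}-t_{W'}\|_{L^2}\le C_{\mathrm{PE}}\varepsilon$ yields $L_\theta C_{\mathrm{PE}}\varepsilon$.

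\textbf{MP case.} I apply the first inequality of the proposition with $W_1=W^\pi$, $W_2=W'$, $x_1=t_{W^\pi}$ and $x_2=t_{W'}$. This gives
\[
\|f_\theta(W^\pi,t_{W^\pi})-f_\theta(W',t_{W'})\|\le L_\theta\bigl(\|t_{W^\pi}-t_{W'}\|_{L^2}+\|T_{W^\pi}-T_{W'}\|_{L^2\to L^2}\bigr).
\]
Bounding the first term by $C_{\mathrm{PE}}\varepsilon$ and the second by $\varepsilon$ gives $L_\theta(1+C_{\mathrm{PE}})\varepsilon$.

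\textbf{Main obstacle.} The only subtle point is making sure Proposition~\ref{prop:lipschitz_stability_informal} covers the realized MP map. The proposition is informal, so I would sketch its content rather than merely cite it. The argument is an induction over layers. At layer $\ell$, I split
\[
h(T_1)x_1-h(T_2)x_2=h(T_1)(x_1-x_2)+\bigl(h(T_1)-h(T_2)\bigr)x_2.
\]
The first piece is controlled by the uniform bound on $\|h(T)\|$. The second piece needs $\|h(T_1)-h(T_2)\|\le C_h\|T_1-T_2\|$, which I would obtain from a polynomial or Lipschitz spectral filter together with $\|T_W\|\le 1$. It also needs $\|x_2\|$ to be bounded uniformly across layers, which follows from the normalized inputs and $\rho(0)=0$ (or bounded offsets). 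The $1$-Lipschitz activations and the bounded mixing matrices $M_\ell$ then propagate these bounds through the layers. Accumulating over $L$ layers gives a constant depending only on the depth and the uniform bounds.

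I also need to address relabeling. If the output is a graph-level readout, such as an integral over $[0,1]$, it is invariant under $\pi$ because $\pi$ is measure-preserving. If the output is a node-level field, the comparison is made after relabeling, as the statement already does by using $W^\pi$ throughout. Either way, no extra term arises.
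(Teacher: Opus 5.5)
Your proposal is correct and follows the paper's proof. The paper simply reruns the argument of Theorem~\ref{thm:nn_pe_stability}, combining the backbone Lipschitz bound with the assumed PE-token bound, with $(W^\pi,t_{W^\pi})$ and $(W',t_{W'})$ in place of the graph/graphon pair. For the set case it passes through the $\mathsf{W}_1$ coupling $U\mapsto(t_{W^\pi}(U),t_{W'}(U))$ of the pushforward token measures, which is exactly what the set inequality you cite encodes.

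One small caution on your optional sketch. A merely Lipschitz filter need not satisfy $\|h(T_1)-h(T_2)\|\le C_h\|T_1-T_2\|$ in operator norm. This is why the paper assumes $h\in C^1$ with Lipschitz derivative; polynomial filters are fine.
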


\begin{theorem}[Graph transferability via error decomposition, simplified]
\label{thm:graph_graph_decomposition}
Let $G^{(1)}$ and $G^{(2)}$ be two graphs sampled from graphons
$W^{(1)}$ and $W^{(2)}$, respectively, and let $W_{G^{(1)}}$ and $W_{G^{(2)}}$ denote their induced
step-graphons (after suitable node permutations).
Assume the following operator discrepancies are bounded:
\[
\begin{aligned}
&\varepsilon_1 := \|T_{W_{G^{(1)}}}-T_{W^{(1)}}\|_{L^2\to L^2},
\\
&\varepsilon_2 := \|T_{W_{G^{(2)}}}-T_{W^{(2)}}\|_{L^2\to L^2},
\end{aligned}
\]
and there exists a measure-preserving bijection $\pi$ such that
\[
\varepsilon_{\mathrm{gra}} := \|T_{(W^{(1)})^\pi}-T_{W^{(2)}}\|_{L^2\to L^2}.
\]
Let $t_{G^{(1)}},t_{G^{(2)}}$ be the (discrete) PE token maps on graphs and
$t_{W^{(1)}},t_{W^{(2)}}$ the corresponding graphon PE maps. Assume PE stability holds with a
uniform constant $C_{\mathrm{PE}}$ in the sense that the induced token discrepancies satisfy
\[
\begin{aligned}
\|t_{G^{(1)}}-t_{W^{(1)}}\|_{L^2}&\le C_{\mathrm{PE}}\,\varepsilon_1,\qquad\\
\|t_{G^{(2)}}-t_{W^{(2)}}\|_{L^2}&\le C_{\mathrm{PE}}\,\varepsilon_2,\qquad\\
\|t_{(W^{(1)})^\pi}-t_{W^{(2)}}\|_{L^2}&\le C_{\mathrm{PE}}\,\varepsilon_{\mathrm{gra}}.
\end{aligned}
\]
Let $f_\theta(W,x)$ and $f_\theta(x)$ be the two backbones in the simplified notation, and assume
the Lipschitz stability in Proposition~\ref{prop:lipschitz_stability_informal} with constant $L_\theta$.
Then the model output discrepancy admits the decomposition
\begin{align*}
\text{(Set)}\quad
&\bigl\|f_\theta(t_{G^{(1)}})-f_\theta(t_{G^{(2)}})\bigr\|
\\&\le
L_\theta\,C_{\mathrm{PE}}\,
\bigl(\varepsilon_1+\varepsilon_{\mathrm{gra}}+\varepsilon_2\bigr),\\
\text{(MP)}\quad
&\bigl\|f_\theta(W_{G^{(1)}},t_{G^{(1)}})-f_\theta(W_{G^{(2)}},t_{G^{(2)}})\bigr\|
\\&\le
L_\theta\,(1+C_{\mathrm{PE}})\,
\bigl(\varepsilon_1+\varepsilon_{\mathrm{gra}}+\varepsilon_2\bigr).
\end{align*}
\end{theorem}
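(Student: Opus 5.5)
The plan is a triangle-inequality chain through the two latent graphons. For the set backbone, insert the intermediate tokens $t_{W^{(1)}}$, $t_{(W^{(1)})^\pi}$ and $t_{W^{(2)}}$ between $t_{G^{(1)}}$ and $t_{G^{(2)}}$:
\begin{align*}
\bigl\|f_\theta(t_{G^{(1)}})-f_\theta(t_{G^{(2)}})\bigr\|
&\le \bigl\|f_\theta(t_{G^{(1)}})-f_\theta(t_{W^{(1)}})\bigr\|
+\bigl\|f_\theta(t_{W^{(1)}})-f_\theta(t_{(W^{(1)})^\pi})\bigr\|\\
&\quad+\bigl\|f_\theta(t_{(W^{(1)})^\pi})-f_\theta(t_{W^{(2)}})\bigr\|
+\bigl\|f_\theta(t_{W^{(2)}})-f_\theta(t_{G^{(2)}})\bigr\|.
\end{align*}
I would bound the first and last terms with the Lipschitz estimate of Proposition~\ref{prop:lipschitz_stability_informal}. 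Combined with the assumed token bounds $\|t_{G^{(i)}}-t_{W^{(i)}}\|_{L^2}\le C_{\mathrm{PE}}\varepsilon_i$, these two terms give $L_\theta C_{\mathrm{PE}}\varepsilon_1$ and $L_\theta C_{\mathrm{PE}}\varepsilon_2$. The third term is exactly the (Set) case of Lemma~\ref{lem:nn_pe_stability_graphon_graphon}, which gives $L_\theta C_{\mathrm{PE}}\varepsilon_{\mathrm{gra}}$.

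The key point is that the relabeling term, the second one, vanishes. I would prove that the set readout is invariant under measure-preserving relabelings. Spectral PEs are equivariant under relabeling: $T_{W^\pi}=U_\pi^{*} T_W U_\pi$, where $(U_\pi f)=f\circ\pi^{-1}$ is unitary. So eigenfunctions transform as $\phi_\ell\mapsto\phi_\ell\circ\pi$ (up to sign for Eig-PE), and $t_{W^\pi}=t_W\circ\pi$. Then $\int_0^1\phi_\theta(t_W(\pi(u)))\,du=\int_0^1\phi_\theta(t_W(u))\,du$ because $\pi$ preserves Lebesgue measure. Hence $f_\theta(t_{W^\pi})=f_\theta(t_W)$ and the second term is $0$.

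For Proj-PE the readout must also be relabeled consistently, $r\mapsto r\circ\pi$. For Eig-PE the sign ambiguity is resolved by choosing the eigenbasis of $T_{W^\pi}$ as $\phi_\ell\circ\pi$. I would state this choice as a convention, since the PE maps are only defined up to such choices. The same convention is implicit in the phrase ``after suitable node permutations'' for the step-graphons: the permuted graphs of Theorem~\ref{thm:op_rate_eval_sampling_main} produce identical DeepSets outputs by the discrete–step correspondence of \S\ref{subsec:step_graphon}.

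The MP case follows the same chain, with one change: each link uses the two-argument Lipschitz bound of Proposition~\ref{prop:lipschitz_stability_informal}, which contributes $L_\theta(\|x-x'\|+\|T-T'\|)$.
\begin{itemize}
\item The first link, between $(W_{G^{(1)}},t_{G^{(1)}})$ and $(W^{(1)},t_{W^{(1)}})$, gives $L_\theta(C_{\mathrm{PE}}\varepsilon_1+\varepsilon_1)$.
\item The last link gives the same bound with $\varepsilon_2$.
\item The middle link is the (MP) case of Lemma~\ref{lem:nn_pe_stability_graphon_graphon}, giving $L_\theta(1+C_{\mathrm{PE}})\varepsilon_{\mathrm{gra}}$.
\end{itemize}
The relabeling link needs more care here, because the MP output is a graphon signal, not a vector. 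By unitary equivariance, $f_\theta(W^\pi,t_W\circ\pi)=f_\theta(W,t_W)\circ\pi$, since every spectral filter satisfies $h(U_\pi^*T_WU_\pi)=U_\pi^*h(T_W)U_\pi$ and pointwise nonlinearities commute with composition. So either the output is read out by a graph-level (integral) pooling, which is invariant, or the output discrepancy is measured modulo relabeling. In both cases the link contributes nothing, and summing the remaining terms gives $L_\theta(1+C_{\mathrm{PE}})(\varepsilon_1+\varepsilon_{\mathrm{gra}}+\varepsilon_2)$.

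The main obstacle is this relabeling and alignment bookkeeping, rather than the inequalities themselves. It requires three things:
\begin{itemize}
\item the PE maps are equivariant, including a consistent choice of signs and readout;
\item the node permutations used to define $W_{G^{(i)}}$ do not change the discrete model output;
\item the outputs in the MP case are compared in a relabeling-invariant way.
\end{itemize}
I would handle this first, as a short equivariance lemma. After that, everything reduces to Proposition~\ref{prop:lipschitz_stability_informal}, Lemma~\ref{lem:nn_pe_stability_graphon_graphon} and the assumed token bounds.
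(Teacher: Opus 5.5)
Your proposal is correct and follows the paper's route. The paper's proof is a triangle inequality through the latent graphons, with two graph-to-graphon links (Theorem~\ref{thm:nn_pe_stability}, which is Proposition~\ref{prop:lipschitz_stability_informal} combined with the assumed token bounds) and one graphon-to-graphon link (Lemma~\ref{lem:nn_pe_stability_graphon_graphon}), exactly as you do. Your extra relabeling link, which vanishes by unitary equivariance and measure preservation, spells out an alignment step the paper's one-line proof leaves implicit. In its formal appendix version the paper handles this step for the MP case by comparing $\Phi(\cdot)\circ\pi$ with the second output, and it tacitly relies on the same consistent choice of eigenvector signs and relabeled readout that you flag.
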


When graphs are sampled from graphons as in \S\ref{sec:large_graphs_closer_graphon}, the sampling terms
$\varepsilon_1,\varepsilon_2$ arise from concentration of the induced step-graphon operators around the latent graphon
operators and typically shrink as graph size grows. Thus, the prediction gap between two finite graphs is governed by
(i) the mismatch between their underlying graphons and (ii) finite-sample approximation error controlled by graph size.

This yields two practical insights for generalizable GFMs: (a) limit the train--test \emph{graphon distance}
$\varepsilon_{\mathrm{gra}}$, since domain mismatch can dominate even for large graphs; and (b) leverage larger graphs (or
aggregate data to increase effective size) to reduce $\varepsilon_1,\varepsilon_2$.

Finally, PEs enter through the stability constant $C_{\mathrm{PE}}$, which depends on the PE dimension and spectral gaps:
larger dimensions or small gaps can amplify discrepancies, while projector-based encodings are typically less sensitive in
small-gap regimes.


\begin{figure}
    \centering
    \includegraphics[width=\linewidth]{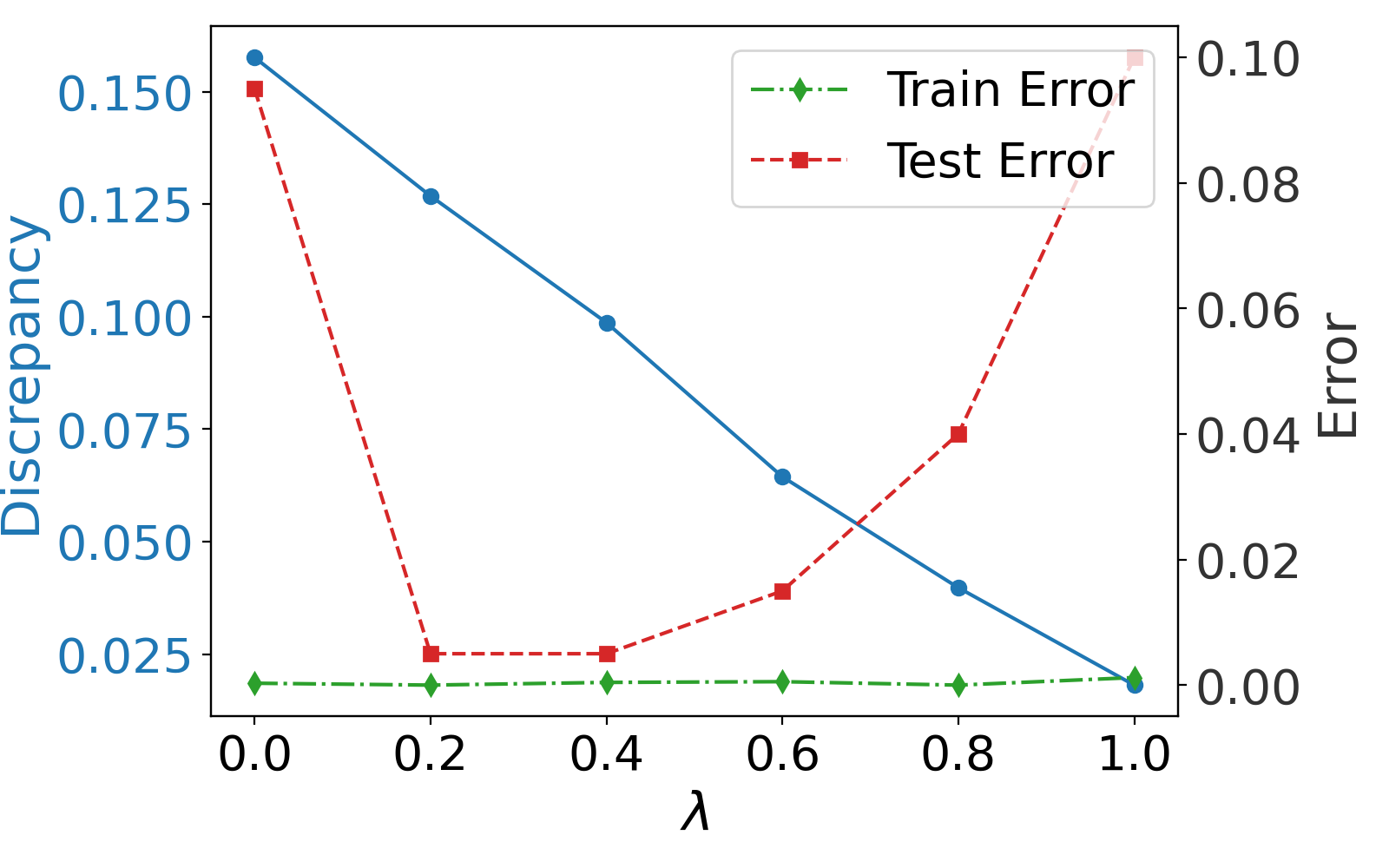}
    \caption{\textbf{Size shift.} As $\lambda$ increases, the train--test token-distribution discrepancy (left axis) decreases monotonically.}
    \vspace{-3mm}
    \label{fig:train_test}
\end{figure}

\section{Experiments}
In this section, to verify our theoretical results, we build a controlled graphon-classification benchmark to provide insights in graph foundation model training. Our experimental code and data are available at \url{https:github.com/zhuconv/GraphFM}.


\subsection{Controlled Task Settings}
\label{subsec:controlled_settings}

\paragraph{Graphon classes and labels.}
We fix $C$ latent graph domains represented by graphons $\{W_c\}_{c=1}^C$.
To form a synthetic graph-classification task, we sample graphs from these graphons and assign each graph the label
$c$ of its generating graphon. Each $W_c$ is independently designated as a train- or test-domain graphon with equal
probability.

\paragraph{Graph generation.}
We use low-rank Fourier graphons
\[
W(x,y)=\rho+\sum_{r=1}^{R}\lambda_r\,\phi_r(x)\phi_r(y),\qquad \rho\in(0,1),
\]
where $\{\phi_r\}$ are Fourier basis functions.
To sample a size-$n$ graph, we draw latent coordinates $u_1,\ldots,u_n\stackrel{\mathrm{i.i.d.}}{\sim}\mathrm{Unif}(0,1)$
and set the (weighted) adjacency as $A_{ij}=W(u_i,u_j)$ (with $A_{ii}=0$ and $A_{ij}=A_{ji}$ for symmetric $W$).
We choose coefficients to ensure $W(x,y)\in[0,1]$; full constructions and parameter choices are deferred to
Appendix~\ref{app:controlled_settings_details}.

\begin{figure}
    \centering
    \includegraphics[width=\linewidth]{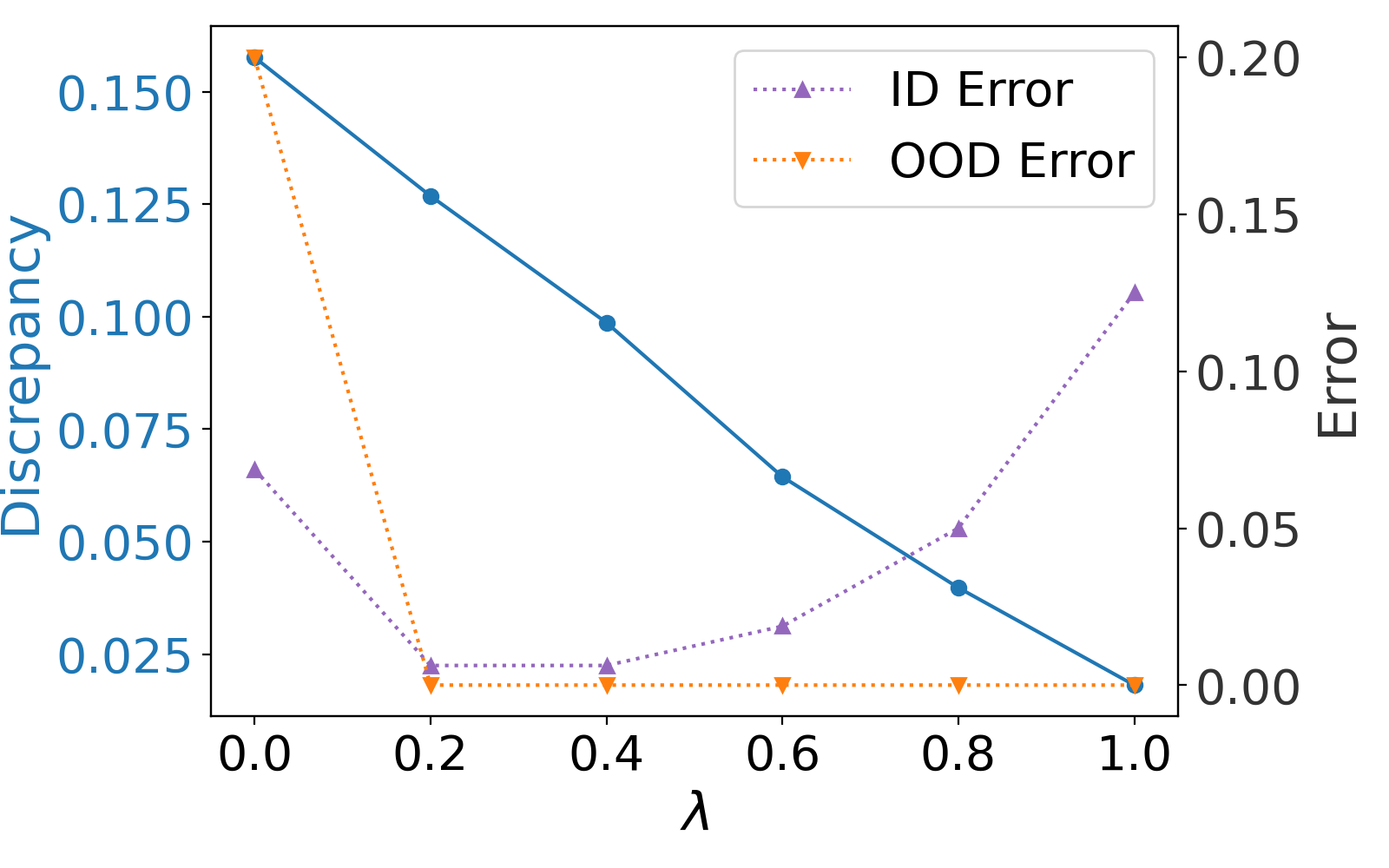}
    \caption{\textbf{Effect of ID error v.s. OOD error in test error.} ID error keeps decreasing while OOD error has a U-shape. 
    }
    \label{fig:id_ood}
    \vspace{-3mm}
\end{figure}

\paragraph{Train/test splits and size shift.}
We instantiate $C=8$ classes. Training graphs are sampled under a fixed total node budget (100k) across sizes
$n\in\{128,256,512,1024\}$, controlled by a mixing parameter $\lambda\in[0,1]$ that reallocates budget from smaller
to larger graphs as $\lambda$ increases.
For evaluation, we sample a balanced test set across $n\in\{128,256,512,1024,2048\}$, where $n=2048$ is an
out-of-distribution size to probe size generalization.

\paragraph{Models, training, and metrics.}
Unless stated otherwise, we use a DeepSets backbone on node tokens and adopt Eig-PE by default (top-$k$ eigenvectors,
with $k=32$).
To demonstrate that the data-centric terms are not tied to a specific backbone, we further include GIN~\cite{xu2019how} as a message-passing backbone in the following results.
We report test error ($1-\mathrm{Acc}$).
To diagnose distribution shift in the node-token space (including PE components), we also report a dataset-level
token discrepancy between train and test, computed as a Wasserstein-type distance between the corresponding mixtures of
empirical token measures. Precise estimators (subsampling, sliced Wasserstein, and all hyperparameters) are provided in
Appendix~\ref{app:controlled_settings_details}.


\begin{figure}
    \centering
    \includegraphics[width=\linewidth]{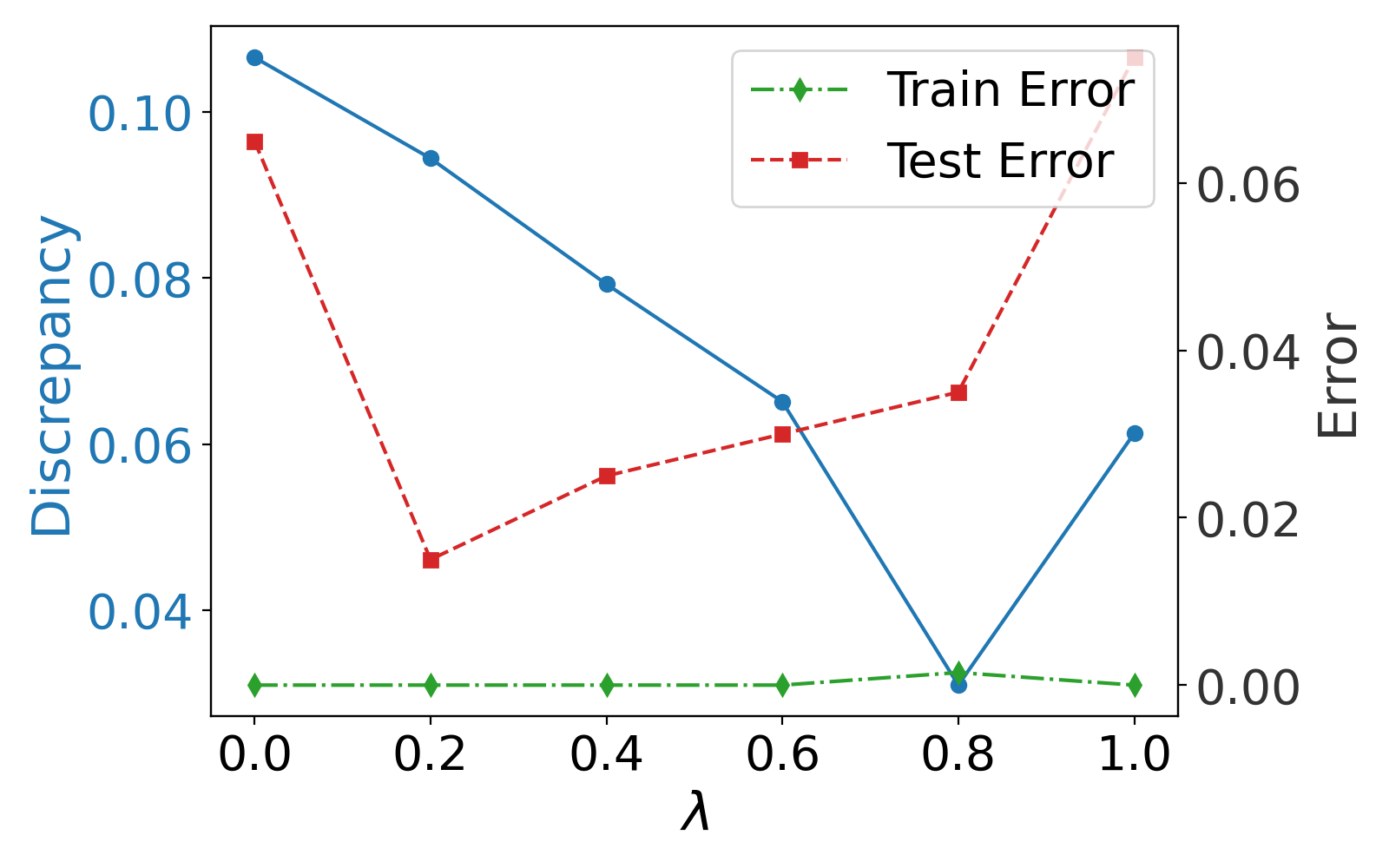}
    \caption{\textbf{Size shift with GIN.} The test error remains U-shaped.}
    \vspace{-3mm}
    \label{fig:gin_train_test}
\end{figure}

\subsection{Size Generalization}
\label{subsec:size_gen}
We study how \emph{size shift} affects generalization when training GFMs, and how graph merging interacts with this effect.

As $\lambda$ increases (Fig.~\ref{fig:train_test}), the train--test token-distribution discrepancy decreases
monotonically, while the training error stays near zero, indicating optimization is not the bottleneck. Yet the test
error exhibits a U-shape: it initially improves but degrades for large $\lambda$ even though the discrepancy continues to
shrink.

We attribute this to a \emph{coverage} issue: large $\lambda$ reallocates the training budget toward larger graphs,
reducing exposure to smaller (in-domain) sizes and causing under-learning there. To verify this, we split the test error
into \emph{in-domain} sizes (seen in training) and an \emph{out-of-domain} size ($n=2048$). As shown in
Fig.~\ref{fig:id_ood}, the OOD error quickly saturates near zero, whereas the in-domain error increases with $\lambda$ and
eventually dominates, explaining the U-shape.

The size-shift mechanism is not specific to the set-based backbone. Replacing
DeepSets with a message-passing backbone, GIN~\cite{xu2019how}, reproduces the
same picture (Fig.~\ref{fig:gin_train_test} and Fig.~\ref{fig:gin_id_ood}): as $\lambda$ increases, the
train--test token discrepancy broadly decreases while the training error stays
near zero, yet the test error follows the same U-shape. The ID/OOD split also shows the same crossover as before: larger $\lambda$ improves the OOD-size error but hurts in-domain sizes. 
Absolute values differ slightly from Figures~\ref{fig:train_test}--\ref{fig:id_ood} because this sweep uses a freshly sampled batch of synthetic graphs.





\paragraph{Graph Merging.}
To improve size generalization, we study a simple \emph{graph merging} augmentation that increases coverage of larger
sizes. For each class, we estimate a class-specific latent graphon from its training graphs and then augment the
training set by sampling an additional 1\% graphs whose average size is $2\times$ larger than the originals. This is
motivated by the view that graphs sharing a label can be treated as samples from a common generator
(e.g., \cite{han2022gmixup}). We evaluate this strategy across different values of $\lambda$, since in practice the
train–test size gap is unknown and the training set may correspond to different size mixtures.

As shown in Fig.~\ref{fig:merging_graph}, merging helps most when the train--test gap is large: at the extremes
(e.g., $\lambda=0$ or $1$), a small fraction of larger synthetic graphs consistently reduces test error. For intermediate
$\lambda$, where the baseline already performs well, the gains are limited and may introduce minor fluctuations, likely
because augmentation perturbs an already well-aligned training distribution.

\begin{figure}
    \centering
    \includegraphics[width=\linewidth]{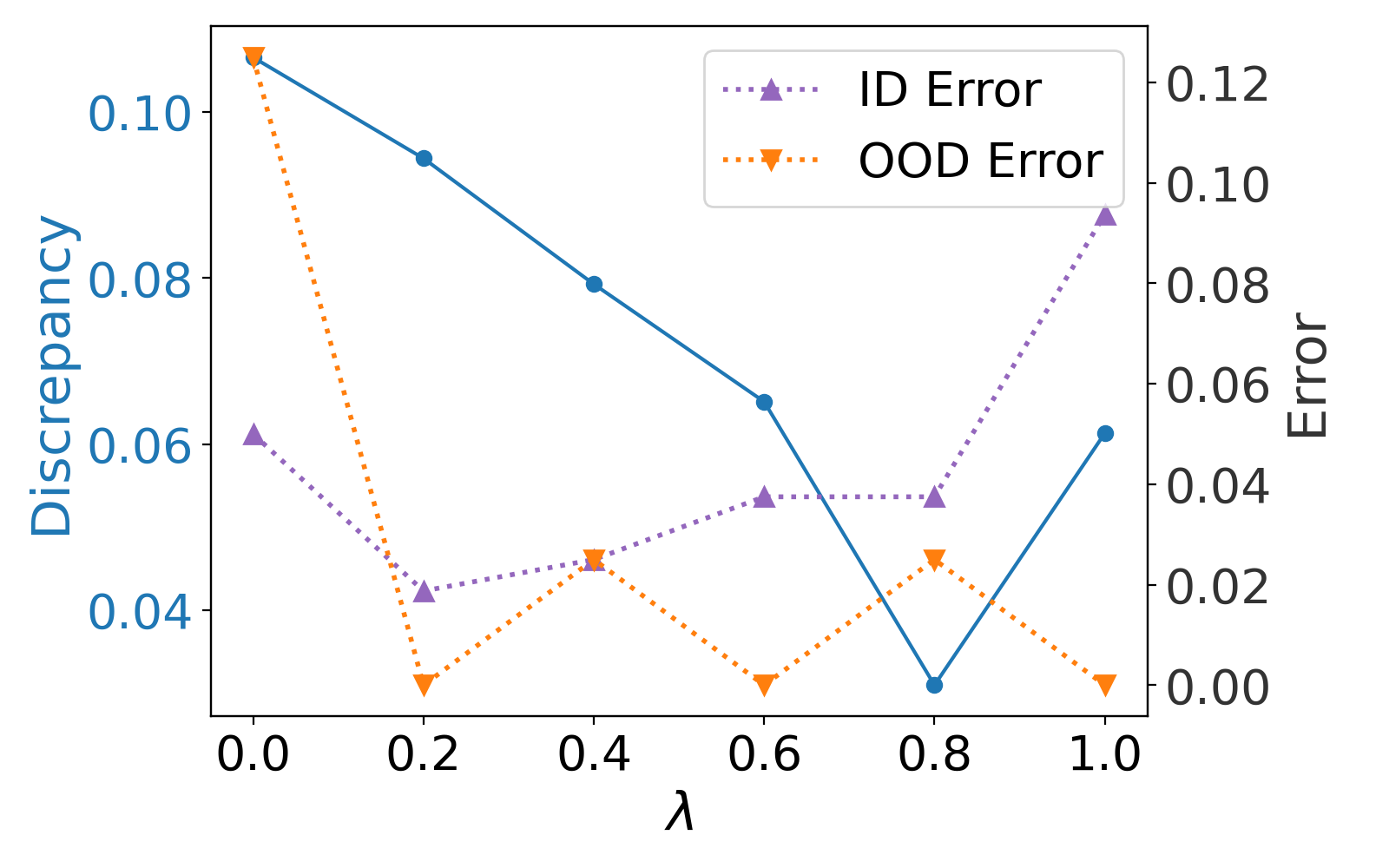}
    \caption{\textbf{Effect of ID error v.s. OOD error with GIN.}}
    \vspace{-3mm}
    \label{fig:gin_id_ood}
\end{figure}



\begin{figure}
    \centering
    \includegraphics[width=\linewidth]{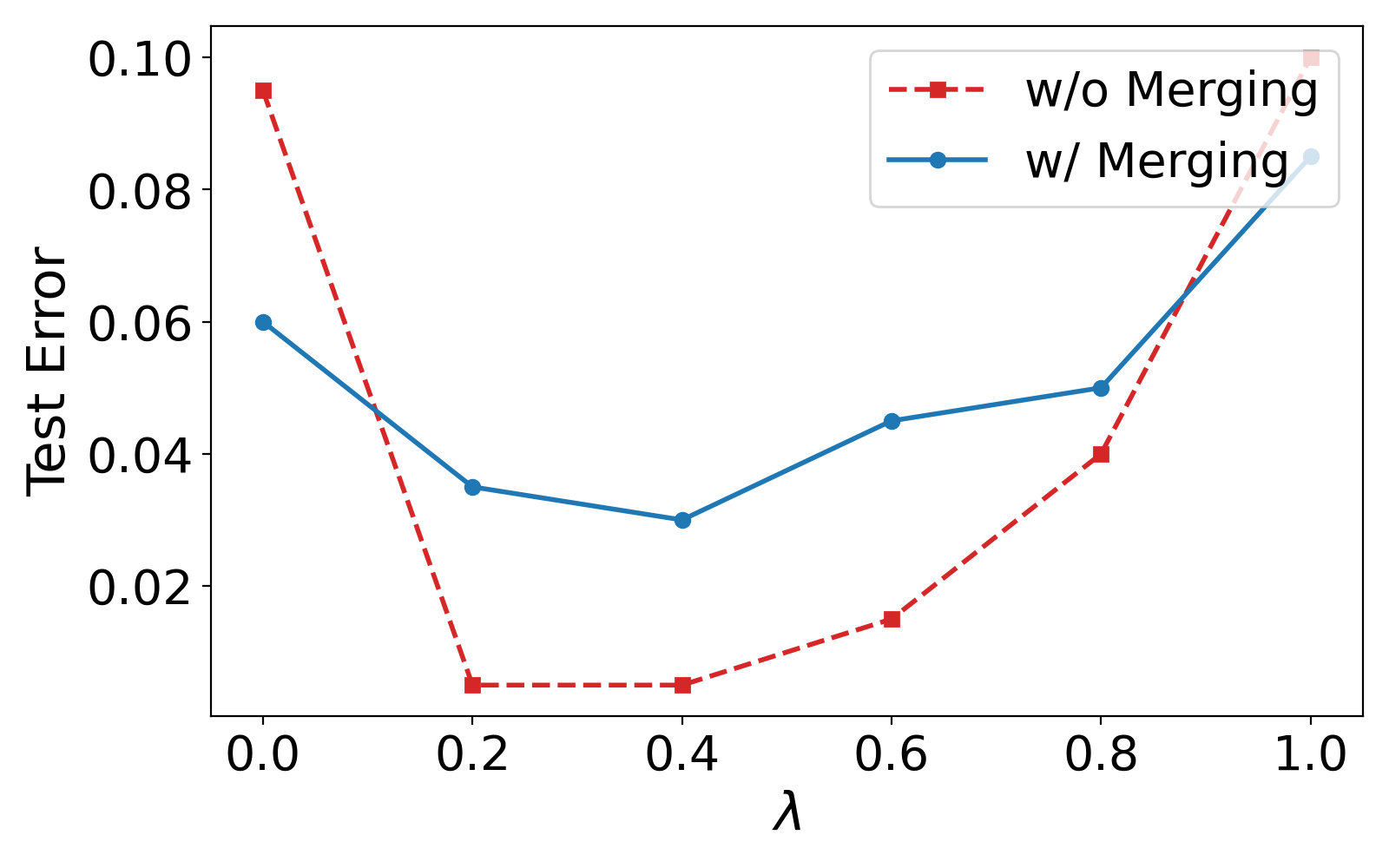}
    \caption{\textbf{Effect of graph merging augmentation.} We compare the test error of original training recipe  and the augmented recipe with graph merging across different  $\lambda$.}
    \vspace{-6mm}
    \label{fig:merging_graph}
\end{figure}


\paragraph{Real-world Cases.}
We evaluate graph merging on three graph-classification benchmarks (COLLAB, IMDB-BINARY, REDDIT-BINARY). Following the
vanilla splits, we augment training data with class-conditioned synthetic graphs sampled from an estimated class-specific
graphon, sweeping the merging ratio from 1\% to 5\%, and report test error (the vanilla row uses no augmentation).

As shown in Table~\ref{tab:real_merge}, modest merging can improve generalization, but effects are dataset-dependent:
COLLAB improves most at 1\%, IMDB-BINARY peaks around 3\% and is otherwise nearly flat, and REDDIT-BINARY shows mild gains
near 3\% with small fluctuations elsewhere. Overall, merging helps most when the baseline has room to improve, whereas
overly aggressive augmentation can hurt.

  \begin{table}[h]
  \centering
\caption{\textbf{Graph merging on real-world datasets.} Test error (lower is better) when
  augmenting the training set with 1\%--5\% synthetic graphs sampled from class-specific
  estimated graphons. The vanilla row uses no augmentation; bold marks the best per
  dataset.}  \resizebox{0.95\linewidth}{!}{

  \begin{tabular}{lccc}
  \toprule
  Ratio & COLLAB & IMDB-BINARY & REDDIT-BINARY \\
  \midrule
  Vanilla & 0.4384 & 0.4631 & 0.4108 \\
  1\% & \textbf{0.4069} & 0.4631 & 0.4367 \\
  2\% & 0.4428 & 0.4631 & 0.4293 \\
  3\% & 0.4256 & \textbf{0.4508} & \textbf{0.4084} \\
  4\% & 0.4355 & 0.5369 & 0.4182 \\
  5\% & 0.4753 & 0.4631 & 0.4330 \\
  \bottomrule
  \end{tabular}
  }
  \label{tab:real_merge}
  \end{table}

\subsection{Graphon Generalization}
\label{graphon_gen}
In this subsection, we study how the train--test \emph{graphon shift} affects generalization. We take the best-generalizing model from \S~\ref{subsec:size_gen}, namely the model trained with the $\lambda=0.2$ recipe (test error $<0.05$ under the size shift setting). Keeping the training setup fixed, we then replace 50\% graphs in test sets with graphs generated from \emph{perturbed} graphons with increasing perturbation level (0--5), and report the test error together with its decomposition into an in-graphon (IG) component and an out-of-graphon (OOG) component.

As shown in Figure~\ref{fig:graphon_gen}, the model remains accurate on the in-graphon samples: performance on seen graphs is largely unchanged as the perturbation level increases. In contrast, the out-of-graphon error rises dramatically as soon as we introduce a graphon perturbation, and it dominates the total test error thereafter. This suggests that \emph{graphon shift alone} is not necessarily challenging when evaluating on familiar sizes, but \emph{combining} graphon shift with \emph{size extrapolation} is substantially harder and can lead to severe degradation.


\subsection{PE's Effect on Generalization}\label{subsec:exp_pe_gen}

\begin{figure}
    \centering
    \includegraphics[width=\linewidth]{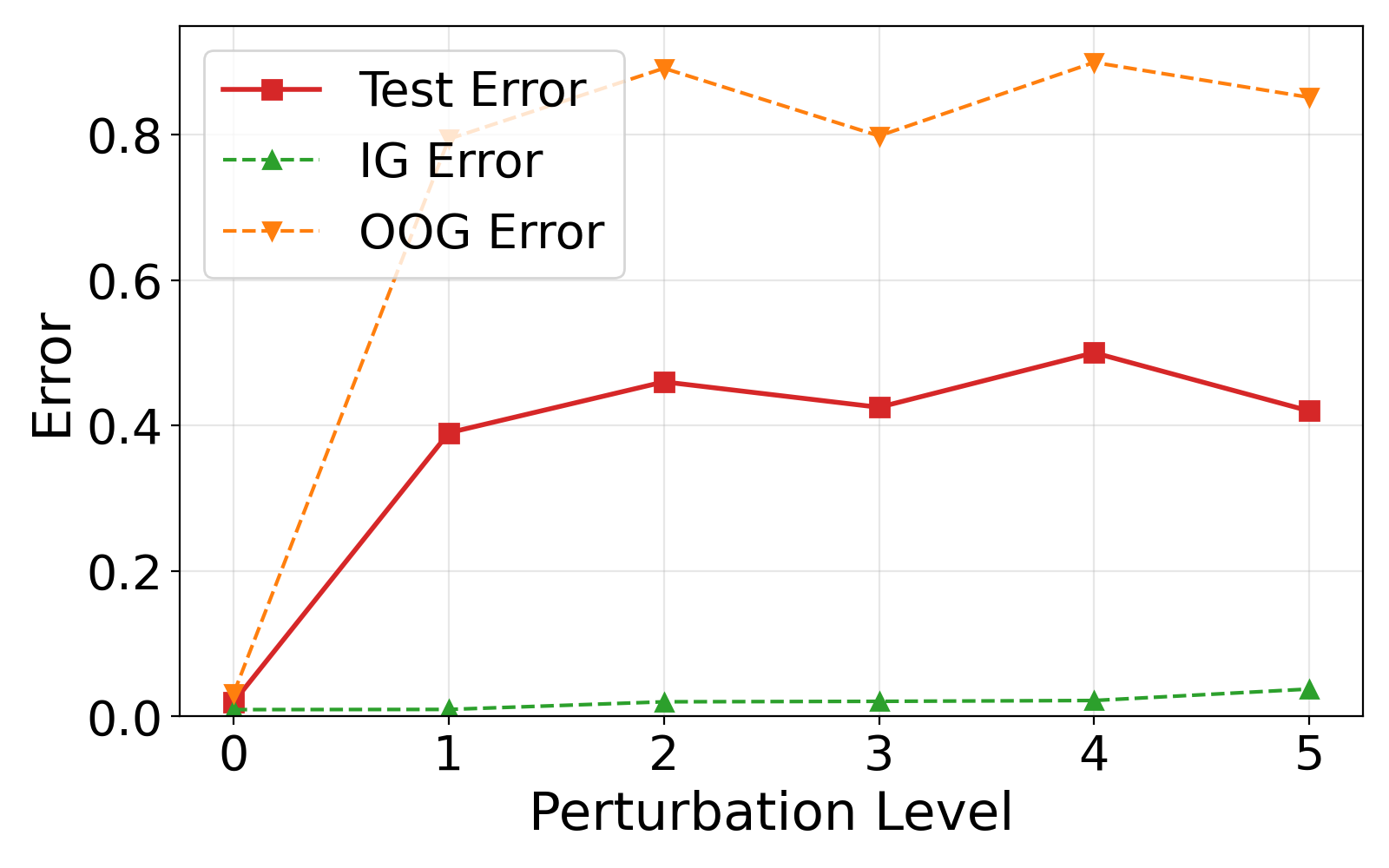}
    \caption{\textbf{Graphon shift.}
    We evaluate the model trained at $\lambda=0.2$ while perturbing the test graphon with increasing perturbation level (larger $L_2$ distance).
    The training error is essentially zero.
    }
    \vspace{-3mm}
    \label{fig:graphon_gen}
\end{figure}


\label{PE_sweep}

In this subsection, we analyze how the hyperparameters and eigen-gaps affect the PE-related constants, and how these changes translate into differences in generalization performance.

\paragraph{Eig-PE.}
Following \S\ref{graphon_gen}, we fix the size-shift setting at $\lambda=0.2$ and keep the sampled train/test graphs
unchanged, while sweeping the Laplacian Eig-PE dimension $k$. For each $k$, we retrain the same DeepSets backbone.
Figure~\ref{fig:eigpe-k} reports performance together with an empirical stability proxy
$\log_{10}\!\bigl(\sqrt{k}/\min(\text{eigengap})\bigr)$ computed from the first $k$ eigenvalues.

We observe an \textbf{expressivity--stability trade-off}. Small $k$ underfits due to limited token expressivity, while intermediate $k$ minimizes test error; for larger $k$, performance becomes less stable and degrades, especially on the
OOD-size subset. This matches the gap-sensitive bound
$C_{\mathrm{eig}}\propto \sqrt{k}\max_{\ell\le k}\gamma_\ell^{-1}$
(Lemma~\ref{lem:graphon_graphon_pe_stability}): as $k$ grows, shrinking eigengaps amplify the instability of eigenvector tokens under sampling noise and graph variation. Consistently, we found the train--test token discrepancy increases nearly monotonically with $k$, whereas test error is non-monotone, suggesting that early expressivity gains can initially outweigh the cost of increased cross-domain shift.

The same qualitative trade-off also appears with a message-passing backbone.
Using GIN~\cite{xu2019how}, we again observe that intermediate $k$ minimizes
test error, while both too-small and too-large $k$ hurt performance, with the
large-$k$ degradation most visible on the OOD-size subset
(Figure~\ref{fig:gin_eigpe-k}, Appendix~\ref{app:exp_gin_gen}). This supports that
$C_{\mathrm{eig}}$ captures a PE-stability effect rather than a
DeepSets-specific artifact.

\begin{figure}
    \centering
    \includegraphics[width=\linewidth]{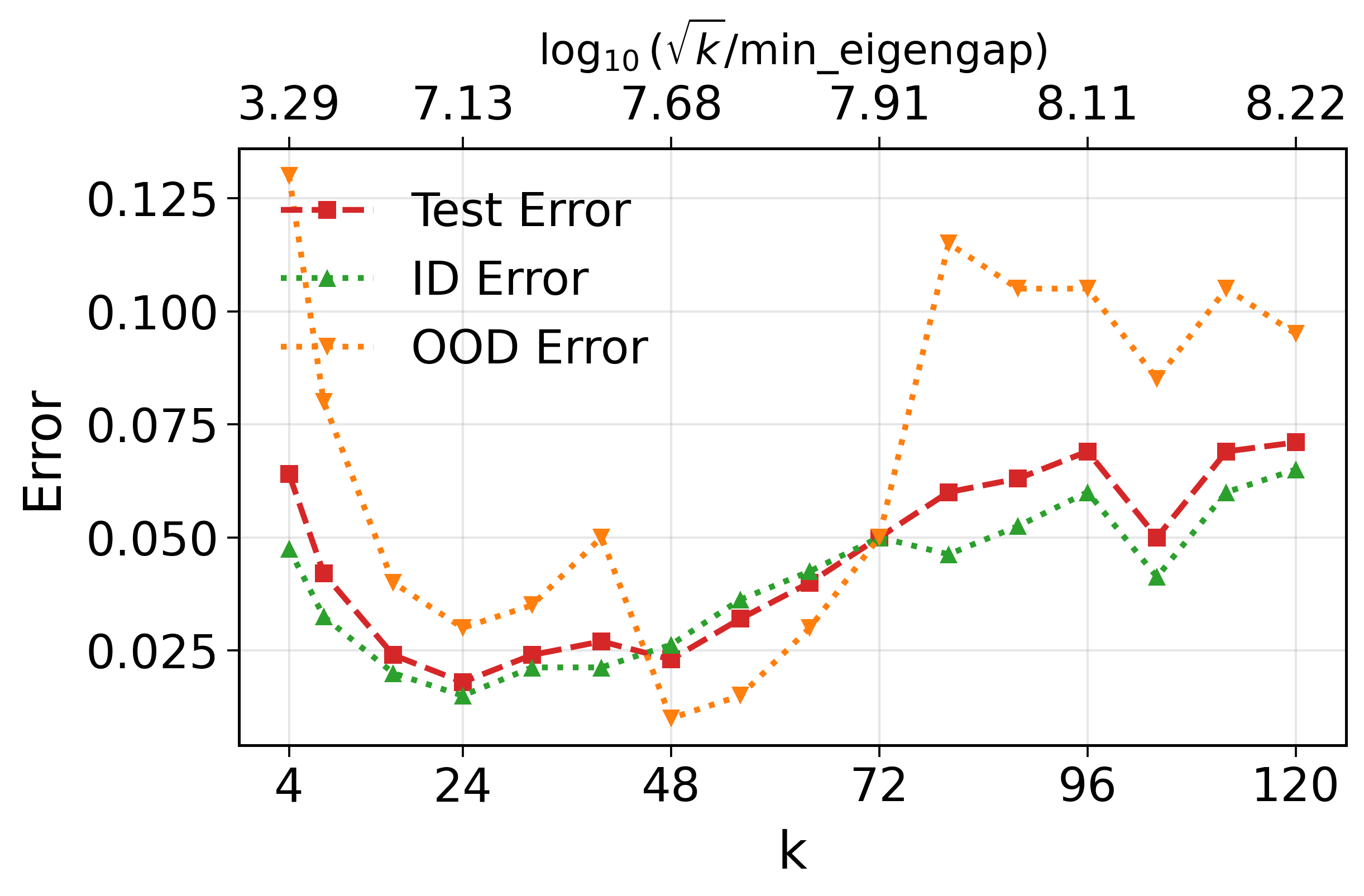}
    \caption{\textbf{Eig-PE sweep: test error vs chosen k.} We evaluate test error with its ID and OOD components versus the Eig-PE dimension $k$. We set $\lambda=0.2$ and fix the train/test graph dataset. Each point averages 5 independent training runs. The top axis reports the stability proxy $\log_{10}\!\bigl(\sqrt{k}/\min(\mathrm{eigengap})\bigr)$, indicating that larger $k$ corresponds to smaller minimum eigengaps and less stable tokens.}
    \vspace{-3mm}
    \label{fig:eigpe-k}
\end{figure}

\paragraph{Proj-PE.}
We also evaluate projector-style encodings that output an $m$-dimensional node token via a readout function. We test a \emph{learnable} variant (details in Appendix~\ref{app:spe_impl}),  where the PE module is trained jointly with the DeepSets backbone; since it converges more slowly than fixed eig-PEs, we train it for more epochs in these sweeps.

Figure~\ref{fig:spe_sweep} sweeps $k$ with a fixed readout dimension $m=32$.
The test error increases noticeably as $k$ grows, with the OOD component degrades more sharply at large $k$, suggesting that token instability still persists under Proj-PE.
Compared with Eig-PE, we observe a clear degradation on the smallest in-domain graphs (e.g., $n=128$), which can make the averaged ID error exceed the OOD error for moderate $k$; this suggests that, in our implementation, Proj-PE may be less effective on small graphs. Notably, at $k=8 \text{ or } 16$ the OOD error is lower than under Eig-PE, suggesting that the learnable readout mechanism may improve expressivity in the low-rank regime (subject to training variance).

Experimentally, both PE types degrade as $k$ grows, and the separation predicted by Remark~\ref{rem:eig_vs_proj_pe_short} is less visible in our setting.
The main reason is that our experiment does not isolate this prediction: $C_{\mathrm{proj}}$ is derived for a fixed projector, whereas our Proj-PE uses a learnable readout trained jointly with the backbone, introducing optimization confounders that make it difficult to isolate the PE stability effect alone. Moreover, the prediction applies to a specific spectral regime, where the top-$k$ eigenspace is well separated from the rest of the spectrum while the eigenvectors inside it are individually unstable. Our low-rank Fourier graphons do not produce this regime: when $k$ exceeds the intrinsic rank, the spectrum is nearly flat and provides no such well-separated top-$k$ block. A cleaner comparison would use a fixed projector together with graphs having heterogeneous gap structure, such as well-separated eigenvalue clusters with small intra-cluster gaps; we leave this comparison to future work.




\begin{figure}[t]
    \centering
    \vspace{1mm}
    \includegraphics[width=\linewidth]{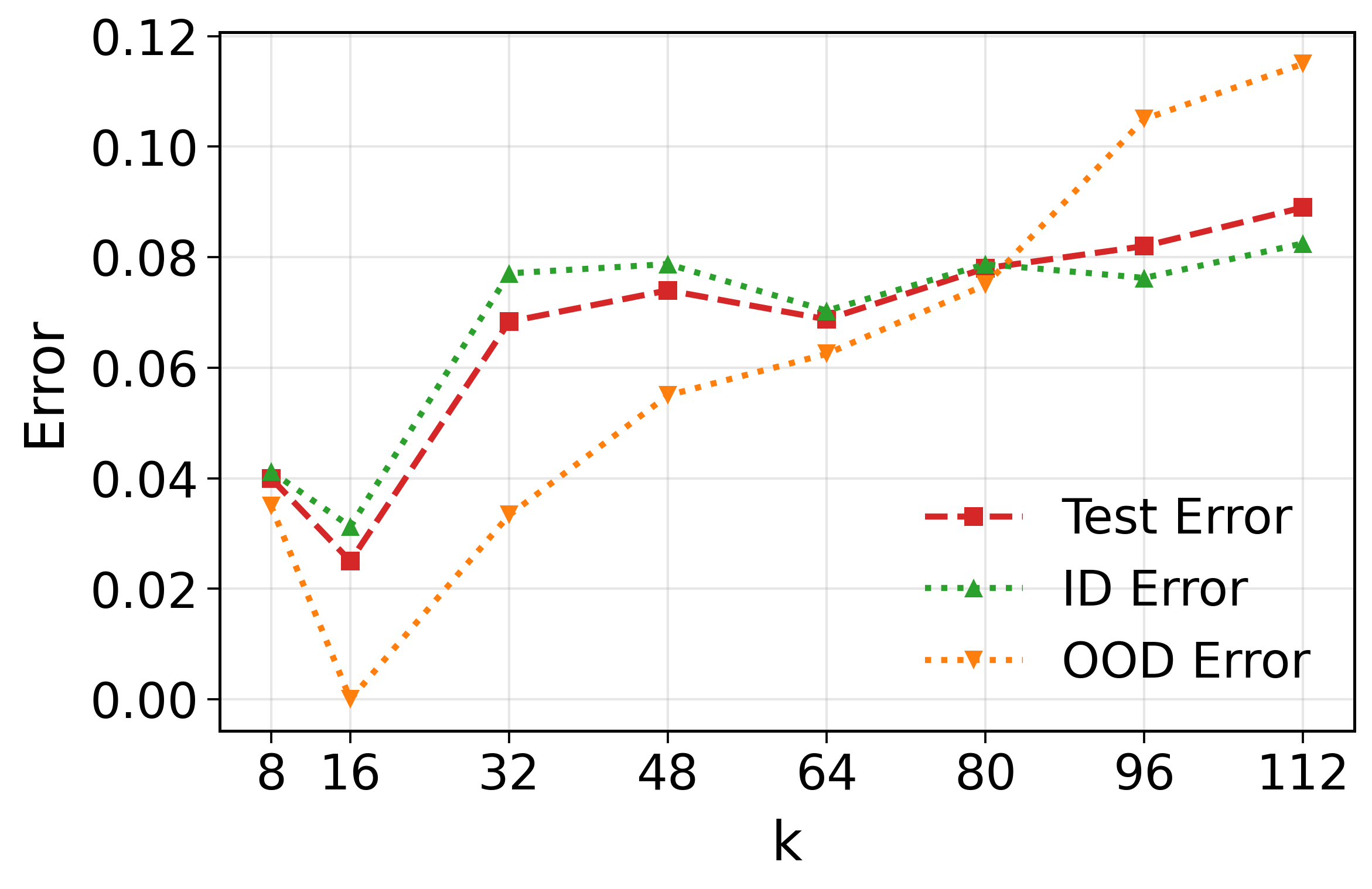}
    \caption{\textbf{Proj-PE sweep: test error vs chosen k (fixed m = 32).} We evaluate test error with its ID and OOD components versus the chosen spectral rank $k$, with a fixed readout dimension $m=32$.  We fix the train/test graph dataset with $\lambda =0.2$; each point reports the average error over 5 independent training. }
    \label{fig:spe_sweep}
    \vspace{-3mm}
\end{figure}

\section{Conclusion}

We presented a graphon-based, data-centric theory of transfer in graph foundation models, yielding a decomposition of cross-domain output shift into finite-sample approximation terms and an intrinsic, relabeling-invariant generator-mismatch term. Experiments show that allocating more training mass to larger graphs reduces token discrepancy but does not necessarily improve test error; graph merging can mitigate large gaps, generalization degrades as graphon perturbations grow, and PE exhibits an expressivity--stability trade-off as eigen-gaps shrink. Together, these results suggest that robust GFM transfer hinges on controlling generator mismatch, balancing size coverage, and choosing PE designs that avoid gap-induced instability. More broadly, our results argue that GFM transfer should be evaluated as a domain-level data problem, not only as a model-design problem. Size shifts, generator shifts, and PE-induced instability affect performance through different mechanisms; separating these factors makes negative transfer easier to diagnose and turns the theory into guidance for training-data construction.

\vspace{-3mm}
\paragraph{Limitations.}
Our theory is developed for dense graphs, where graphons are the natural limiting objects and the operator-norm concentration used in Theorem~\ref{thm:op_rate_eval_sampling_main} applies. 
Extending the analysis to sparse graphs would require replacing this step with sparse or $L^p$-graphon convergence tools~\cite{borgs2019Lp}; we expect the same decomposition into sampling error, graphon mismatch, and PE stability to remain meaningful, but with different rates and constants. Our results also assume a Lipschitz backbone, which we empirically verify for pretrained GFMs in Appendix~\ref{app:lipschitz_gfm}. Extending the study to attributed, heterogeneous, or dynamic graphs remains an important direction for future work.

\section*{Acknowledgements}

ZW is supported in part by NSF Awards 2145346 (CAREER), 2523383 (DMS), and the NSF AI Institute for Foundations of Machine Learning (IFML). AA acknowledges support from the NSF LDOS Expedition under Grant No. 2326576 and the InfraAI Center at The University of Texas at Austin. PL is partially supported by NSF grants III-2239565 and III-2428777. YH is supported by a Jetstream2 AI Fellowship and an NVIDIA Academic Grant Award. PW is in part supported by Google PhD Fellowship in Machine Learning and ML Foundations.

\section*{Impact Statement}
This paper presents work whose goal is to advance the field of Machine
Learning. There are many potential societal consequences of our work, none
which we feel must be specifically highlighted here.


\bibliography{reference}
\bibliographystyle{icml2026}

\newpage
\appendix
\onecolumn


\section{Proof of Graphs are Step-Graphons}
\label{app:graph_step_graphon}
\begin{proposition}[Equivalence between graph PE and step-graphon PE]
\label{prop:exact_correspondence_induced_graphon}
Let $\Delta\in\mathbb{R}^{n\times n}$ and let $\T_{W_{\Delta}}$ be its induced graphon operator on $L^2([0,1])$.
Let $\{(\lambda_j,\varphi_j)\}_{j=1}^n$ be an eigendecomposition of $\Delta$.
Then:
\begin{enumerate}
\item (\textbf{Eigenpair correspondence}) For every $j=1,\ldots,n$,
\[
\T_{W_{\Delta}}\,S\varphi_j \;=\; \lambda_j\,S\varphi_j.
\]

\item (\textbf{PE correspondence}) Fix $k\le n$. For any $u\in P_i$,
\[
(S t_G)(u) \;=\; t_{W_\Delta}(u),
\]
where $t$ denotes either $t^{\mathrm{eig}}$ or $t^{\mathrm{proj}}$.
\end{enumerate}
\end{proposition}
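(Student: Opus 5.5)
The plan is a direct computation: first show how $\T_{W_\Delta}$ acts on step functions, then read off both claims from that identity.

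\textbf{Key identity.} Take $x\in\R^n$ and $u\in P_i$. Then $W_\Delta(u,v)=n\Delta_{ij}$ for $v\in P_j$, and each $P_j$ has length $1/n$. Hence
\[
(\T_{W_\Delta}Sx)(u)=\sum_{j=1}^n n\Delta_{ij}\,x_j\,\tfrac1n=(\Delta x)_i=(S\Delta x)(u),
\]
that is, $\T_{W_\Delta}S=S\Delta$ on $\R^n$. Part 1 follows at once: $\T_{W_\Delta}S\varphi_j=S\Delta\varphi_j=\lambda_j S\varphi_j$.

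\textbf{Inner products.} We also need $S$ to be an isometry from $(\R^n,\langle\cdot,\cdot\rangle_n)$ onto the step functions in $L^2$. Indeed,
\[
\langle Sx,Sy\rangle_{L^2}=\sum_j x_jy_j\tfrac1n=\langle x,y\rangle_n .
\]
So $\{S\varphi_j\}_{j=1}^n$ is orthonormal in $L^2([0,1])$.

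\textbf{Spectrum of $\T_{W_\Delta}$.} Its range lies in the $n$-dimensional space of step functions. It annihilates the orthogonal complement of that space: for $f\perp$ step functions, $\int_{P_j}f=0$ for every $j$. So the spectrum is $\{\lambda_j\}$ together with $0$ on the complement, and $\{S\varphi_j\}$ can be taken as the eigenfunctions for the nonzero part.

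\textbf{Part 2, Eig-PE.} Choose the graphon eigenbasis as $\phi_\ell=S\varphi_\ell$, which is consistent with the ordering. Then $t_{W_\Delta}^{eig}(u)=(\varphi_1(u),\dots)$ evaluated on $P_i$ equals $(\varphi_{1,i},\dots,\varphi_{k,i})=(St_G^{eig})(u)$.

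\textbf{Part 2, Proj-PE.} For $u\in P_i$, with $r=SR$,
\[
(\Pi_{W_\Delta,k}r)(u)=\sum_{\ell\le k}\langle SR,S\varphi_\ell\rangle_{L^2}\,\varphi_{\ell,i}=\sum_{\ell\le k}\tfrac1n(\varphi_\ell^\top R)\,\varphi_{\ell,i}=(\Pi_kR)_{i}=(S\Pi_kR)(u).
\]

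\textbf{Main obstacle: eigenbasis ambiguity.} The eigenbasis is not unique under sign flips and repeated eigenvalues. In addition, the extra zero eigenvalue of $\T_{W_\Delta}$ with infinite multiplicity can interleave with the index ordering $\lambda_1\le\cdots$. The plan is to state Part 2 as holding for the compatible choice $\phi_\ell=S\varphi_\ell$, and to note that if some $\lambda_\ell$ with $\ell\le k$ equals $0$, the graphon's first $k$ eigenfunctions must be chosen from $\mathrm{span}\{S\varphi_j\}$.

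Proj-PE depends only on the spectral subspace. It is therefore basis-independent whenever the top-$k$ block is separated, which is exactly the setting of Assumption~\ref{ass:proj_pe_stability}.
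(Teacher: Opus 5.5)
Your proof is correct and follows the paper's route: the intertwining identity $T_{W_\Delta}S=S\Delta$ gives Part 1, the isometry $\langle Sx,Sy\rangle_{L^2}=\langle x,y\rangle_n$ lets you choose $\phi_\ell=S\varphi_\ell$ for Eig-PE, and the same isometry gives $\Pi_{W_\Delta,k}(SR)=S(\Pi_k R)$ for Proj-PE. Your extra observation that $T_{W_\Delta}$ vanishes on the orthogonal complement of the step functions, so its spectrum is $\{\lambda_j\}\cup\{0\}$, adds rigor that the paper leaves to the phrase ``up to ordering/sign''; like the paper, you ultimately rely on choosing a compatible eigenbasis and ordering.
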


\begin{proof}
\noindent\textbf{Step-operator identity.}
Fix any $x\in\mathbb R^n$ and let $v\in P_i$. Then $W_\Delta(v,u)=n\Delta_{ij}$ for $u\in P_j$, so
\begin{align*}
(T_{W_\Delta}Sx)(v)
&=\int_0^1 W_\Delta(v,u)\,(Sx)(u)\,du
= \sum_{j=1}^n \int_{P_j} n\Delta_{ij}\,x_j\,du \\
&= \sum_{j=1}^n n\Delta_{ij}\,x_j\cdot \lambda(P_j)
= \sum_{j=1}^n n\Delta_{ij}\,x_j\cdot \frac1n
= \sum_{j=1}^n \Delta_{ij}x_j
= (\Delta x)_i.
\end{align*}
Since $(S(\Delta x))(v)=(\Delta x)_i$ for $v\in P_i$, we have the operator  relation
\begin{equation}
\label{eq:intertwine}
T_{W_\Delta}S = S\Delta,
\qquad\text{i.e.,}\qquad
T_{W_\Delta}(Sx)=S(\Delta x)\ \ \forall x\in\mathbb R^n.
\end{equation}

\medskip
\noindent\textbf{(1) Eigenpair correspondence.}
Let $\Delta\varphi_j=\lambda_j\varphi_j$. Applying \eqref{eq:intertwine} with $x=\varphi_j$ yields
\[
T_{W_\Delta}(S\varphi_j)=S(\Delta\varphi_j)=S(\lambda_j\varphi_j)=\lambda_j\,S\varphi_j,
\]
which proves (1).

\medskip
\noindent\textbf{(2) Eig-PE correspondence.}
Assume the discrete eigenvectors $\{\varphi_\ell\}_{\ell=1}^n$ are orthonormal under
$\langle x,y\rangle_n=\frac1n x^\top y$ (equivalently $\|\varphi_\ell\|_2^2=n$), so that
$\|S\varphi_\ell\|_{L^2}=1$ and $\{S\varphi_\ell\}$ is an $L^2$-orthonormal family.
By (1), $\{(\lambda_\ell,S\varphi_\ell)\}$ is an eigendecomposition of $T_{W_\Delta}$ (up to ordering/sign),
hence the graphon Eig-PE map for $W_\Delta$ can be chosen as
\[
t^{\mathrm{eig}}_{W_\Delta}(u)=(S\varphi_1(u),\ldots,S\varphi_k(u)).
\]
On the other hand, by Definition~\ref{def:eig_pes},
\[
(S t_G^{\mathrm{eig}})(u)=(S\varphi_1(u),\ldots,S\varphi_k(u)).
\]
Therefore $(S t_G^{\mathrm{eig}})(u)=t^{\mathrm{eig}}_{W_\Delta}(u)$ for all $u\in[0,1]$,
provided the same ordering/sign convention is used.

\medskip
\noindent\textbf{(3) Projector-PE correspondence.}
Let $\Pi_k x \;=\; \sum_{\ell=1}^k \langle x, \varphi_\ell \rangle_n \,\varphi_\ell$ be the rank-$k$ projector on $\mathbb R^n$
(with respect to $\langle\cdot,\cdot\rangle_n$), and let $\Pi_{W_\Delta,k}$ be the rank-$k$ projector on $L^2([0,1])$
onto $\mathrm{span}\{S\varphi_1,\ldots,S\varphi_k\}$.
Using $L^2$-orthonormality of $\{S\varphi_\ell\}_{\ell=1}^k$ and the identity
\[
\langle Sx, S\varphi_\ell\rangle_{L^2}
=\sum_{j=1}^n\int_{P_j} x_j\varphi_\ell(j)\,du
=\frac1n x^\top \varphi_\ell
=\langle x,\varphi_\ell\rangle_n,
\]
we obtain, for any $x\in\mathbb R^n$,
\[
\Pi_{W_\Delta,k}(Sx)
=\sum_{\ell=1}^k \langle Sx,S\varphi_\ell\rangle_{L^2}\,S\varphi_\ell
=\sum_{\ell=1}^k \langle x,\varphi_\ell\rangle_n\,S\varphi_\ell
= S\!\left(\sum_{\ell=1}^k \langle x,\varphi_\ell\rangle_n\,\varphi_\ell\right)
= S(\Pi_k x).
\]
Take $R\in\mathbb R^{n\times m}$ with readout function $r:=SR\in L^2([0,1];\mathbb R^m)$ channel-wise, so that
$r_q=(SR)_q=S(R_{:q})$.
Then for each $q$,
\[
t^{\mathrm{proj}}_{W_\Delta,q}
=\Pi_{W_\Delta,k} r_q
=\Pi_{W_\Delta,k}(S(R_{:q}))
=S(\Pi_k R_{:q})
=(S t_G^{\mathrm{proj}})_q.
\]
Hence $t^{\mathrm{proj}}_{W_\Delta}(u)=(S t_G^{\mathrm{proj}})(u)$ for all $u\in[0,1]$.
This proves (3).
\end{proof}

\section{Formal Arguments for Lipschitz Networks}
\label{app:lipschitz_nn}

\begin{assumption}[Lipschitz DeepSets]
\label{ass:lipschitz_deepsets}
A DeepSets-type representation on a measure $\mu$ over $\R^k$ is of the form
\[
F_\theta(\mu)\;:=\;\rho_\theta\!\left(\int_{\R^k}\phi_\theta(z)\,d\mu(z)\right),
\]
where $\phi_\theta:\R^k\to\R^d$ is the token embedding and $\rho_\theta:\R^d\to\R$ is the post-aggregation map.
Assume $\rho_\theta$ is $L_\rho$-Lipschitz and and $\phi_\theta$ is $L_\phi$-Lipschitz.

\smallskip
In particular, for an empirical measure $\mu=\frac1N\sum_{i=1}^N\delta_{z_i}$,
\[
F_\theta(\mu)\;=\;\rho_\theta\!\left(\frac1N\sum_{i=1}^N \phi_\theta(z_i)\right),
\]
which recovers the standard DeepSets architecture.
\end{assumption}

\begin{proposition}
\label{prop:lipschitz_deepsets_output}
Let $F_\theta$ satisfy Assumption~\ref{ass:lipschitz_deepsets},
let $\mu$ and $\nu$ be probability measures on $\R^k$ with finite first moment. Then
\[
\big|F_\theta(\mu)-F_\theta(\nu)\big|
\;\le\;
L_\rho\,L_\phi\; \Wone(\mu,\nu),
\]
where $\Wone$ is the $1$-Wasserstein distance on $\R^k$.
In particular, if $\Wone(\mu,\nu)\le \varepsilon$, then $|F_\theta(\mu)-F_\theta(\nu)|\le L_\rho L_\phi\,\varepsilon$.
\end{proposition}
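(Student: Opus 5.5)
The plan is to reduce the statement to the Kantorovich--Rubinstein duality, after splitting off the outer Lipschitz map. Write $m_\mu:=\int_{\R^k}\phi_\theta(z)\,d\mu(z)\in\R^d$ and likewise $m_\nu$. These means are well defined because $\phi_\theta$ is $L_\phi$-Lipschitz, so $\|\phi_\theta(z)\|\le\|\phi_\theta(0)\|+L_\phi\|z\|$, and $\mu,\nu$ have finite first moment. Since $\rho_\theta$ is $L_\rho$-Lipschitz,
\[
|F_\theta(\mu)-F_\theta(\nu)|\le L_\rho\,\|m_\mu-m_\nu\|,
\]
so it remains to show $\|m_\mu-m_\nu\|\le L_\phi\,\Wone(\mu,\nu)$.

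For that vector bound I would use a coupling argument directly rather than duality coordinatewise, because coordinatewise duality would cost a factor $\sqrt d$. Let $\gamma$ be any coupling of $\mu$ and $\nu$. Then
\[
m_\mu-m_\nu=\int\big(\phi_\theta(z)-\phi_\theta(z')\big)\,d\gamma(z,z').
\]
By Jensen's inequality (convexity of the norm, i.e.\ the Bochner integral bound),
\[
\|m_\mu-m_\nu\|\le\int\|\phi_\theta(z)-\phi_\theta(z')\|\,d\gamma\le L_\phi\int\|z-z'\|\,d\gamma .
\]
Taking the infimum over couplings gives $L_\phi\,\Wone(\mu,\nu)$; an optimal coupling exists, but the infimum alone suffices. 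The norm on $\R^k$ used in $\Wone$ should match the one in the Lipschitz assumption on $\phi_\theta$, with the Euclidean norm on both sides. Combining the two displays gives the claim, and the ``in particular'' statement follows immediately.

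The only step needing care is integrability and the vector-valued Jensen step. The integrand $\|\phi_\theta(z)-\phi_\theta(z')\|$ is dominated by $L_\phi(\|z\|+\|z'\|)$, which is $\gamma$-integrable by the first-moment assumption, so Fubini and linearity of the integral under the coupling are justified.
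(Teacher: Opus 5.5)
Your proposal is correct and follows the paper's proof essentially step for step: peel off $\rho_\theta$ via its Lipschitz constant, write $m_\mu-m_\nu$ as an integral against an arbitrary coupling, apply the Bochner/Jensen norm bound and the Lipschitzness of $\phi_\theta$, then take the infimum over couplings. Your integrability remarks (the finite first moment makes the means well defined and justifies the coupling integral) add rigor that the paper leaves implicit. Despite your opening mention of Kantorovich--Rubinstein duality, the argument you actually run uses only the primal definition of $\Wone$, exactly as the paper does.
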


\begin{assumption}[Lipschitz GNN]
\label{ass:lipschitz_gnn}
A graph neural network $\Phi:=\Phi(H,M,\rho): \R^k \to \R^d$ is \emph{Lipschitz} if the following hold.
\begin{enumerate}[label=(\alph*)]
\item \textbf{Contractive nonlinearity.}
$|\rho(x)-\rho(y)|\le |x-y|$ for all $x,y\in\R$.

\item \textbf{Bounded operators.}
For all input graphs/graphons under consideration, the associated (self-adjoint) operators satisfy
\[
\|T_W\|_{2\to2}\le \Gamma,\qquad \|T_{W_\Delta}\|_{2\to2}\le \Gamma .
\]

\item \textbf{Regular spectral filters.}
For every layer $\ell=1,\ldots,L$ and channel pair $(j,k)$, the filter $h_\ell^{jk}\in C^1[-\Gamma,\Gamma]$,
$(h_\ell^{jk})'$ is Lipschitz on $[-\Gamma,\Gamma]$, and $\|h_\ell^{jk}\|_{L^\infty[-\Gamma,\Gamma]}\le 1$.

\item \textbf{Bounded mixing matrices.}
$\|M_\ell\|_\infty \le M$ for $\ell=1,\ldots,L$ (in particular, $M=1$ yields depth-independent constants).
\end{enumerate}
\end{assumption}

\begin{proposition}[Lipschitz stability of GNN outputs]
\label{prop:lipschitz_gnn_output}
Let $\Phi:=\Phi(H,M,\rho)$ be an $L$-layer GNN satisfying Assumption~\ref{ass:lipschitz_gnn}.
Let $W_1,W_2$ be graphons with bounded self-adjoint operators
$T_{W_1},T_{W_2}:L^2([0,1])\to L^2([0,1])$.
Let $x_1,x_2 \in L^2([0,1];\R^k)$ be input feature maps satisfying
$\|x_1\|_{L^2([0,1];\R^k)}\le 1$ and $\|x_2\|_{L^2([0,1];\R^k)}\le 1$.
Then there exists a constant $C_\Phi>0$, depending only on $\Phi$,
such that
\[
\begin{aligned}
&\big\|\Phi_{W_1}(x_1)-\Phi_{W_2}(x_2)\big\|_{L^2([0,1];\R^d)}\\
&~\le~
C_\Phi\Big(
\|T_{W_1}-T_{W_2}\|_{L^2\to L^2}
+\|x_1-x_2\|_{L^2([0,1];\R^k)}
\Big).
\end{aligned}
\]
\end{proposition}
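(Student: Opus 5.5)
We argue by induction over the layers, carrying along a joint bound on the feature maps and on their differences. Write $x_\ell^{(i)}$ for the layer-$\ell$ features produced on $W_i$ from input $x_i$, and set $\varepsilon:=\|T_{W_1}-T_{W_2}\|_{L^2\to L^2}$, $\delta_\ell:=\|x^{(1)}_\ell-x^{(2)}_\ell\|_{L^2}$, with $\delta_0=\|x_1-x_2\|_{L^2}$.

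\textbf{Step 1: feature maps stay bounded.} Since $\rho$ is $1$-Lipschitz, we take $\rho(0)=0$ as is standard; otherwise a constant offset is absorbed into $C_\Phi$. Since $\|h_\ell^{jk}\|_\infty\le 1$ on $[-\Gamma,\Gamma]\supseteq\sigma(T_{W_i})$, the spectral theorem gives $\|h_\ell^{jk}(T_{W_i})\|\le 1$. Combining this with $\|M_\ell\|_\infty\le M$ yields $\|x^{(i)}_\ell\|\le c_\ell\|x^{(i)}_{\ell-1}\|$, where $c_\ell$ depends only on $M$ and the widths. Because $\|x_i\|\le 1$, every layer's features are uniformly bounded by some $B_\ell$.

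\textbf{Step 2: one-layer recursion.} Again by the contractivity of $\rho$, the channel-$j$ difference is bounded by
\[
\sum_k |m^{jk}_\ell|\Big(\|h(T_{W_1})\|\,\|x^{(1),k}_{\ell-1}-x^{(2),k}_{\ell-1}\| + \|h(T_{W_1})-h(T_{W_2})\|\,\|x^{(2),k}_{\ell-1}\|\Big).
\]
This gives $\delta_\ell\le a_\ell\delta_{\ell-1}+b_\ell B_{\ell-1}\,\sup_{j,k}\|h_\ell^{jk}(T_{W_1})-h_\ell^{jk}(T_{W_2})\|$. Unrolling the recursion over $L$ layers produces a bound that is linear in $\delta_0$ and in the filter-difference terms, with constants depending only on the architecture.

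\textbf{Step 3 (main obstacle): operator-Lipschitz filters.} We need $\|h(A)-h(B)\|\le C_h\|A-B\|$ for self-adjoint $A,B$ with spectra in $[-\Gamma,\Gamma]$. Scalar Lipschitz continuity is not enough for this, which is exactly why the assumption requires $h\in C^1$ with Lipschitz derivative. My first approach is to extend $h$ to a compactly supported function on $\R$ with the same regularity. Such a function has $\int|\xi||\hat h(\xi)|\,d\xi<\infty$; this holds, for instance, via a Besov $B^1_{\infty,1}$ argument, or more crudely by additionally requiring enough smoothness. Then we write
\[
h(A)-h(B)=\int\hat h(\xi)\bigl(e^{i\xi A}-e^{i\xi B}\bigr)\,d\xi,
\]
and use Duhamel's formula to get $\|e^{i\xi A}-e^{i\xi B}\|\le|\xi|\,\|A-B\|$. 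If this integrability fails under the stated hypotheses, the fallback is to invoke the known operator-Lipschitz result (Peller) for $C^{1,1}$ functions on a bounded interval, which gives $C_h\lesssim \|h'\|_{\mathrm{Lip}}\Gamma+\|h'\|_\infty$ up to log factors. A second alternative is to approximate $h$ by polynomials on $[-\Gamma,\Gamma]$, use the telescoping bound $\|A^p-B^p\|\le p\Gamma^{p-1}\|A-B\|$, and control the approximation error. Once Step 3 is in place, substituting it into Step 2 gives the result with $C_\Phi=\max\bigl(\prod_\ell a_\ell,\ \sum_\ell(\prod_{\ell'>\ell}a_{\ell'})b_\ell B_{\ell-1}C_h\bigr)$.
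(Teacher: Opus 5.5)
Your proposal is correct and follows the paper's proof essentially step for step: uniform bounds on the layer features, a one-layer perturbation recursion unrolled over $L$ layers, and an operator-Lipschitz bound $\|h(A)-h(B)\|\le C_h\|A-B\|$ for the filters, which the paper simply cites from Maskey et al.\ (their Eq.~(22)) instead of proving. Your first Fourier route for that step also works without the fallbacks: a compactly supported $C^{1,1}$ extension of $h$ has $h''\in L^\infty$ with compact support, so $h\in H^2(\R)$, and Cauchy--Schwarz gives $\int|\xi|\,|\hat h(\xi)|\,d\xi\le\|(1+\xi^2)\hat h\|_{L^2}\,\|\xi/(1+\xi^2)\|_{L^2}<\infty$.
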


\begin{theorem}[Graph-to-Graphon Stability of NNs with PE input]
\label{thm:nn_pe_stability}
Assume $\|T_{W_{\pi(\Delta)}}-T_W\|_{L^2\to L^2}\le \varepsilon$ for some permutation $\pi$.

\begin{enumerate}[label=(\alph*)]
\item \textbf{DeepSets with PE-token measures.}
Let $F_\theta$ satisfy Assumption~\ref{ass:lipschitz_deepsets}.
Let $t_G:[n]\to\R^k$ be a graph PE token map and $t_W:[0,1]\to\R^k$ a graphon PE map, and define
\[
\mu(t_G):=(S\pi t_G)_\#\lambda,\qquad \mu(t_W):=(t_W)_\#\lambda,
\]
where $\lambda$ is the uniform measure on $[0,1]$ and $S\pi t_G$ is the induced step map.
Assume the corresponding PE stability bound holds as constructed in Lemma~\ref{lem:graphon_graphon_pe_stability}:
\[
\|S\pi t_G - t_W\|_{L^2([0,1];\R^k)}\le C_{\mathrm{PE}}\varepsilon.
\]
Then
\[
\big|F_\theta(\mu(t_G))-F_\theta(\mu(t_W))\big|
\;\le\;
L_\rho L_\phi\, C_{\mathrm{PE}}\,\varepsilon.
\]

\item \textbf{GNN with induced step PE signals.}
Let $\Phi$ satisfy Assumption~\ref{ass:lipschitz_gnn}. Assume the corresponding PE stability bound holds as constructed in Lemma~\ref{lem:graphon_graphon_pe_stability}:
\[
\|S\pi t_G - t_W\|_{L^2([0,1];\R^k)}\le C_{\mathrm{PE}}\varepsilon.
\]
Then there exists a constant $C>0$ depending only on $\Phi$ and $C_{\mathrm{PE}}$ such that
\[
\big\|\Phi_{W_{\pi(\Delta)}}(S\pi t_G)-\Phi_W(t_W)\big\|_{L^2([0,1];\R^d)}
\;\le\;
C\,\varepsilon.
\]
\end{enumerate}
\end{theorem}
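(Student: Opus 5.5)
The plan is to reduce both parts to the Lipschitz estimates already stated (Proposition~\ref{prop:lipschitz_deepsets_output} for DeepSets and Proposition~\ref{prop:lipschitz_gnn_output} for GNNs). The token-discrepancy term is then controlled by the assumed PE stability bound $\|S\pi t_G - t_W\|_{L^2}\le C_{\mathrm{PE}}\varepsilon$. The operator term is controlled by the hypothesis $\|T_{W_{\pi(\Delta)}}-T_W\|\le\varepsilon$.

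For part (a), the key step is to pass from the $L^2$ distance between token maps to the Wasserstein distance between their pushforward measures. Consider the coupling of $\mu(t_G)=(S\pi t_G)_\#\lambda$ and $\mu(t_W)=(t_W)_\#\lambda$ given by the joint pushforward $(S\pi t_G, t_W)_\#\lambda$ of the uniform measure. This is an admissible coupling, so
\[
\Wone(\mu(t_G),\mu(t_W)) \le \int_0^1 \|S\pi t_G(u)-t_W(u)\|\,du \le \|S\pi t_G - t_W\|_{L^2([0,1];\R^k)},
\]
where the last step is Cauchy--Schwarz on a probability space. Both measures have finite first moment because the token maps lie in $L^2$. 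Proposition~\ref{prop:lipschitz_deepsets_output} then gives the bound $L_\rho L_\phi \Wone \le L_\rho L_\phi C_{\mathrm{PE}}\varepsilon$. One bookkeeping point needs checking: $F_\theta(\mu(t_G))$ must coincide with the discrete DeepSets readout \eqref{eq:deepsets}. This holds because the pushforward of $\lambda$ under the step map $S\pi t_G$ is the empirical measure $\frac1n\sum_i\delta_{t_G(i)}$, and that measure is invariant under the permutation $\pi$.

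For part (b), I would apply Proposition~\ref{prop:lipschitz_gnn_output} with $W_1=W_{\pi(\Delta)}$, $W_2=W$, $x_1=S\pi t_G$ and $x_2=t_W$. This yields
\[
\big\|\Phi_{W_{\pi(\Delta)}}(S\pi t_G)-\Phi_W(t_W)\big\| \le C_\Phi(\varepsilon + C_{\mathrm{PE}}\varepsilon),
\]
so the theorem holds with $C=C_\Phi(1+C_{\mathrm{PE}})$.

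The main obstacle is verifying that the hypotheses of Proposition~\ref{prop:lipschitz_gnn_output} actually hold, namely the normalization $\|x_i\|_{L^2}\le 1$ and the operator bound $\|T\|\le\Gamma$. For Eig-PE, each $S\varphi_\ell$ and each $\phi_\ell$ has unit norm, so $\|t\|_{L^2}=\sqrt{k}$. For Proj-PE, $\|\Pi r\|\le B$. I would therefore rescale the tokens by $\sqrt{k}$ or by $B$, and absorb the scale into $C$; this is possible because the GNN layers are positively homogeneous up to the Lipschitz constants, or alternatively because the proof of Proposition~\ref{prop:lipschitz_gnn_output} extends to inputs with norm bounded by any constant, with $C_\Phi$ depending on that bound. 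The operator bound $\Gamma=1$ holds automatically, since graphons and induced step graphons take values in $[0,1]$ whenever $n\Delta$ has entries in $[0,1]$.
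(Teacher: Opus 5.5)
Your proposal is correct and follows the paper's proof essentially verbatim. Part (a) uses the same synchronous coupling $(S\pi t_G(U),t_W(U))$ with $U\sim\lambda$ to get $\Wone\le\|S\pi t_G-t_W\|_{L^1}\le\|S\pi t_G-t_W\|_{L^2}$ and then invokes Proposition~\ref{prop:lipschitz_deepsets_output}, and part (b) is the same direct application of Proposition~\ref{prop:lipschitz_gnn_output} giving $C=C_\Phi(1+C_{\mathrm{PE}})$.

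Your check of the normalization $\|x_i\|_{L^2}\le 1$ is something the paper's proof skips, and your second fix is the right one. The recursion in the proof of Proposition~\ref{prop:lipschitz_gnn_output} only uses the per-channel bound $\max_j\|x_i^j\|_{L^2}$, which equals $1$ for Eig-PE and is at most $B$ for Proj-PE. The homogeneity route, by contrast, fails for non-ReLU contractive activations.
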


\section{Main Proofs}
\label{sec:appendix_proofs}

\subsection{Proof of Theorem~\ref{thm:op_rate_eval_sampling_main}}
\label{app:op_rate_eval_sampling_simplified}

We prove the high-probability bound on
$\varepsilon_n=\|T_{W_{\pi_u(\Delta)}}-T_W\|_{L^2\to L^2}$ under evaluation sampling.

\paragraph{Step 1: operator norm $\le$ $L^2$ kernel distance.}
For any $U,V\in L^2([0,1]^2)$, the difference operator $T_{U-V}$ is Hilbert--Schmidt, hence
\begin{equation}
\label{eq:hs_domination_app}
\|T_U-T_V\|_{L^2\to L^2}\le \|U-V\|_{L^2([0,1]^2)}.
\end{equation}
Therefore it suffices to bound $\|W_{\pi_u(\Delta)}-W\|_{L^2}$.

\paragraph{Step 2: reduce to a 1D quantization error via Lipschitzness.}
Define $\tau_n:[0,1]\to[0,1]$ by $\tau_n(x):=u_{(i)}$ for $x\in I_i$.
Then $W_{\pi_u(\Delta)}(x,y)=W(\tau_n(x),\tau_n(y))$ for all $(x,y)$.
By Lipschitzness (Assumption~\ref{ass:lipschitz_graphon_main}),
\[
|W(\tau_n(x),\tau_n(y))-W(x,y)|
\le L\big(|\tau_n(x)-x|+|\tau_n(y)-y|\big).
\]
Using $(a+b)^2\le 2a^2+2b^2$ and integrating over $[0,1]^2$ gives
\begin{equation}
\label{eq:l2_reduce_app}
\|W_{\pi_u(\Delta)}-W\|_{L^2}
\le 2L\Big(\int_0^1 |\tau_n(x)-x|^2\,dx\Big)^{1/2}.
\end{equation}

\paragraph{Step 3: control $\int |\tau_n(x)-x|^2$ via DKW.}
For any $x\in I_i=((i-1)/n,i/n]$,
\[
|\tau_n(x)-x|=|u_{(i)}-x|
\le |u_{(i)}-i/n|+|x-i/n|
\le |u_{(i)}-i/n|+\frac{1}{n}.
\]
Thus
\[
\int_0^1 |\tau_n(x)-x|^2\,dx
\le 2\max_{i}|u_{(i)}-i/n|^2+\frac{2}{n^2}.
\]
Let $F_n(t)=\frac1n\sum_{j=1}^n \mathbf{1}\{u_j\le t\}$ be the empirical CDF.
By the Dvoretzky--Kiefer--Wolfowitz inequality, with probability at least $1-\delta$,
\begin{equation}
\label{eq:dkw_app}
\sup_{t\in[0,1]}|F_n(t)-t|\le \sqrt{\frac{\log(2/\delta)}{2n}}.
\end{equation}
A standard consequence is
\begin{equation}
\label{eq:orderstat_app}
\max_i\Big|u_{(i)}-\frac{i}{n}\Big|
\le \sup_{t\in[0,1]}|F_n(t)-t|+\frac{1}{n}.
\end{equation}
Combining \eqref{eq:dkw_app}--\eqref{eq:orderstat_app} yields, w.p.\ $\ge 1-\delta$,
\[
\max_i|u_{(i)}-i/n|
\le \sqrt{\frac{\log(2/\delta)}{2n}}+\frac{1}{n}.
\]
Plugging into the bound on $\int |\tau_n(x)-x|^2$ and then \eqref{eq:l2_reduce_app} gives, w.p.\ $\ge 1-\delta$,
\[
\|W_{\pi_u(\Delta)}-W\|_{L^2}
\le C L\Big(\sqrt{\frac{\log(2/\delta)}{n}}+\frac{1}{n}\Big)
\]
for a universal constant $C$. Finally, \eqref{eq:hs_domination_app} implies the same bound for
$\varepsilon_n=\|T_{W_{\pi_u(\Delta)}}-T_W\|_{L^2\to L^2}$, completing the proof.
\qed

\subsection{Proof of Corollary~\ref{cor:pe_stability}}
\begin{corollary}[Graph-to-Graphon Stability of PE]
\label{cor:pe_stability}
Fix $k\in\mathbb N$.
Let $W$ be a graphon with bounded self-adjoint operator $T_W$ on $L^2([0,1])$. Let $\Delta$ be the graph shift operator of a finite graph $G$ sampled from $W$, and let $W_{\pi(\Delta)}$ denote the step-graphon induced by $\pi(\Delta)$. Let $t_{\pi(G)}$ be the PE map constructed from the eigenpairs of $\pi(\Delta)$. Assume
\[
\|T_{W_{\pi(\Delta)}}-T_W\|_{L^2\to L^2}\le \varepsilon.
\]
Then, under the corresponding PE regularity assumption (Assumption~\ref{ass:eig_pe_stability} for Eig-PE or Assumption~\ref{ass:proj_pe_stability} for Proj-PE), we have
\[
\|S t^{\mathrm{PE}}_{\pi(G)}-t^{\mathrm{PE}}_{W}\|_{L^2([0,1];\mathbb R^{d_{\mathrm{PE}}})}
~\le~ C_{\mathrm{PE}}\,\varepsilon,
\]

\end{corollary}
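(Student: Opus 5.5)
The plan is to reduce Corollary~\ref{cor:pe_stability} to Lemma~\ref{lem:graphon_graphon_pe_stability}, using the exact correspondence between graph PEs and step-graphon PEs from Proposition~\ref{prop:exact_correspondence_induced_graphon}.

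\textbf{Step 1: identify the graph PE with a graphon PE.} Apply Proposition~\ref{prop:exact_correspondence_induced_graphon} to the permuted operator $\pi(\Delta)$. This gives $T_{W_{\pi(\Delta)}}S\varphi_j=\lambda_j S\varphi_j$, and the vectors $S\varphi_j$ are $L^2$-orthonormal. Hence the eigenpairs of $\pi(\Delta)$ lift to eigenpairs of $T_{W_{\pi(\Delta)}}$.

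The lifted functions span the $n$-dimensional space of step functions. The operator $T_{W_{\pi(\Delta)}}$ vanishes on the orthogonal complement of that space, so its remaining spectrum is $\{0\}$. I will adopt the convention that the graphon PE of $W_{\pi(\Delta)}$ is built from the lifted eigenfunctions $S\varphi_1,\dots,S\varphi_k$ with the same ordering and sign, as in the proposition. This convention needs a short justification when $0$ sits inside the selected range, and I expect the gap assumption to exclude or settle that case.

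Under this convention, $S t^{\mathrm{PE}}_{\pi(G)} = t^{\mathrm{PE}}_{W_{\pi(\Delta)}}$ pointwise. For Proj-PE, the readout is taken to be $r=SR$, as the definition prescribes.

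\textbf{Step 2: apply the graphon-to-graphon lemma.} Use Lemma~\ref{lem:graphon_graphon_pe_stability} with the pair $(W_{\pi(\Delta)}, W)$ and the identity relabeling, which is measure preserving. The hypothesis of the lemma is exactly the assumed bound $\|T_{W_{\pi(\Delta)}}-T_W\|\le\varepsilon$.

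The regularity assumption is imposed on $W$, the graphon whose spectrum carries the gap $\gamma>2\varepsilon$. I will therefore apply the lemma with $W$ as the reference operator and $W_{\pi(\Delta)}$ as its perturbation; operator-norm distance is symmetric, so this costs nothing. The lemma then yields
\[
\|t^{\mathrm{PE}}_{W_{\pi(\Delta)}}-t^{\mathrm{PE}}_W\|_{L^2}\le C_{\mathrm{PE}}\varepsilon .
\]
Combining this with Step 1 finishes the proof.

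\textbf{Main obstacle: ordering and sign alignment.} Two things must hold for Step 1 to combine with Step 2.
\begin{itemize}
\item \emph{Ordering.} The $k$ eigenpairs chosen on the graph side must be the perturbations of the $k$ graphon eigenpairs. Weyl's inequality gives $|\lambda_\ell(\text{perturbed})-\lambda_\ell|\le\varepsilon<\gamma/2$. So each perturbed eigenvalue stays in the isolated window of its target, and the indexing matches.
\item \emph{Signs (Eig-PE).} The eigenvector signs must be fixed, for example by choosing $\langle S\varphi_\ell,\phi_\ell\rangle\ge 0$. This is the same convention implicit in the lemma, and it is consistent because the PE is defined only up to such choices.
\end{itemize}

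Inside the lemma, the bound itself comes from Davis--Kahan. For Eig-PE this gives $\|S\varphi_\ell-\phi_\ell\|\lesssim \varepsilon/(\gamma_\ell-\varepsilon)\le 2\varepsilon/\gamma_\ell$ for each $\ell$, and summing the $k$ coordinates gives the factor $\sqrt k$. For Proj-PE it gives $\|\Pi-\Pi'\|\lesssim\varepsilon/\gamma_k$, which is multiplied by $\|r\|\le B$. No new estimate is needed beyond checking this alignment.
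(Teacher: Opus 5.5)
Your proposal is correct and follows the paper's proof in substance. The paper likewise uses Proposition~\ref{prop:exact_correspondence_induced_graphon} to identify $St^{\mathrm{PE}}_{\pi(G)}$ with $t^{\mathrm{PE}}_{W_{\pi(\Delta)}}$, and then applies Davis--Kahan to the pair $(T_{W_{\pi(\Delta)}},T_W)$: eigenvector-wise with the sign choice $\langle\tilde\phi_\ell,\phi_\ell\rangle\ge 0$ and a $\sqrt{k}$ from summing coordinates for Eig-PE, and via the $\sin\Theta$ projector bound times $\|r\|\le B$ for Proj-PE.

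The only difference is packaging. The paper runs Davis--Kahan directly in this proof and then proves Lemma~\ref{lem:graphon_graphon_pe_stability} by pointing back to this argument. So you should state the Davis--Kahan step yourself, as your last paragraph already does, rather than cite the lemma, which would be circular. Your explicit handling of the index ordering via Weyl and of the zero eigenspace of the step operator is more careful than the paper, which simply fixes ``the same ordering/sign convention.'' The zero eigenspace is harmless: $0\in\sigma(T_W)$, together with simplicity and isolation, forces $|\lambda_\ell|\ge\gamma_\ell>2\varepsilon$.
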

\begin{proof}
(a)
By Proposition~\ref{prop:exact_correspondence_induced_graphon} (Eig-PE correspondence),
after fixing the same ordering/sign convention, we may choose the first $k$ eigenfunctions of
$T_{W_{\pi(\Delta)}}$ as $\{S\varphi_\ell\}_{\ell=1}^k$, where
$\{(\lambda_\ell,\varphi_\ell)\}_{\ell=1}^k$ are eigenpairs of $\pi(\Delta)$. Hence
\[
St^{\mathrm{eig}}_{\pi(G)} \;=\; t^{\mathrm{eig}}_{W_{\pi(\Delta)}} .
\]
Under Assumption~\ref{ass:eig_pe_stability}, the Davis--Kahan theorem for bounded self-adjoint
operators yields eigenfunctions $\tilde\phi_\ell$ of $T_{W_{\pi(\Delta)}}$ (with a sign choice
$\langle \tilde\phi_\ell,\phi_\ell\rangle_{L^2}\ge 0$) such that
\[
\|\tilde\phi_\ell-\phi_\ell\|_{L^2([0,1])}
\;\le\;
C_\ell\,\|T_{W_{\pi(\Delta)}}-T_W\|_{L^2\to L^2}
\;\le\;
C_\ell\,\varepsilon,
\qquad C_\ell=O(1/\gamma_\ell),\ \ell=1,\dots,k.
\]
Taking $\tilde\phi_\ell := (t^{\mathrm{eig}}_{W_{\pi(\Delta)}})_\ell$ gives
\[
\|t^{\mathrm{eig}}_{W_{\pi(\Delta)}}-t^{\mathrm{eig}}_{W}\|_{L^2([0,1];\mathbb R^k)}^2
=
\sum_{\ell=1}^k \|\tilde\phi_\ell-\phi_\ell\|_{L^2([0,1])}^2
\;\le\;
k\Big(\max_{\ell\le k}C_\ell\Big)^2\varepsilon^2.
\]
Therefore,
\[
\|St^{\mathrm{eig}}_{\pi(\Delta)}-t^{\mathrm{eig}}_{W}\|_{L^2([0,1];\mathbb R^k)}
=
\|t^{\mathrm{eig}}_{W_{\pi(\Delta)}}-t^{\mathrm{eig}}_{W}\|_{L^2([0,1];\mathbb R^k)}
\;\le\;
\sqrt{k}\Big(\max_{\ell\le k}C_\ell\Big)\varepsilon,
\]
which proves (a) with $C_{\mathrm{eig}}:=\sqrt{k}\max_{\ell\le k}C_\ell$.

\smallskip
(b)
By Proposition~\ref{prop:exact_correspondence_induced_graphon} (Projector-PE correspondence),
\[
St^{\mathrm{proj}}_{\pi(\Delta)} \;=\; t^{\mathrm{proj}}_{W_{\pi(\Delta)}} .
\]
Under Assumption~\ref{ass:proj_pe_stability}, the Davis--Kahan \emph{sin-$\Theta$} bound yields
\[
\|\Pi_{W_{\pi(\Delta)},k}-\Pi_{W,k}\|_{L^2\to L^2}
\;\le\;
C\,\|T_{W_{\pi(\Delta)}}-T_W\|_{L^2\to L^2}
\;\le\;
C\,\varepsilon,
\qquad C=O(1/\gamma_k).
\]
Thus,
\[
\|St^{\mathrm{proj}}_{\pi(\Delta)}-t^{\mathrm{proj}}_{W}\|_{L^2}
=
\|t^{\mathrm{proj}}_{W_{\pi(\Delta)}}-t^{\mathrm{proj}}_{W}\|_{L^2}
=
\|(\Pi_{W_{\pi(\Delta)},k}-\Pi_{W,k})r\|_{L^2}
\le
\|\Pi_{W_{\pi(\Delta)},k}-\Pi_{W,k}\|_{L^2\to L^2}\,\|r\|_{L^2}
\le
B\,C\,\varepsilon,
\]
which proves (b) with $C_{\mathrm{proj}}:=BC$.
\end{proof}

\subsection{Proof of Lemma~\ref{lem:graphon_graphon_pe_stability}}

\begin{proof}
The claim follows by the same Davis--Kahan perturbation arguments as in
Lemma~\ref{cor:pe_stability}, applied to the pair of bounded self-adjoint operators
$(T_{W^\pi},\,T_{W'})$.
\end{proof}

\subsection{Proof of Proposition~\ref{prop:lipschitz_deepsets_output}}
\begin{proof}
Let
\[
m_\mu:=\int_{\R^k}\phi_\theta(z)\,d\mu(z)\in\R^d,
\qquad
m_\nu:=\int_{\R^k}\phi_\theta(z)\,d\nu(z)\in\R^d.
\]
By $L_\rho$-Lipschitzness of $\rho_\theta$,
\[
|F_\theta(\mu)-F_\theta(\nu)|
\le L_\rho \|m_\mu-m_\nu\|_2.
\]
Fix any coupling $\pi\in\Pi(\mu,\nu)$ of $(Z,Z')\sim\pi$. Then
\[
m_\mu-m_\nu
=
\int_{\R^k\times\R^k}\big(\phi_\theta(z)-\phi_\theta(z')\big)\,d\pi(z,z').
\]
Taking norms and using Jensen (or the triangle inequality for Bochner integrals),
\[
\|m_\mu-m_\nu\|_2
\le
\int \|\phi_\theta(z)-\phi_\theta(z')\|_2\,d\pi(z,z')
\le
L_\phi\int \|z-z'\|_2\,d\pi(z,z').
\]
Taking the infimum over all couplings $\pi$ and using the primal definition of $\Wone$,
\[
\|m_\mu-m_\nu\|_2 \le L_\phi\,\Wone(\mu,\nu).
\]
Combining the above inequalities gives
\[
|F_\theta(\mu)-F_\theta(\nu)|
\le
L_\rho\,L_\phi\,\Wone(\mu,\nu),
\]
as claimed.
\end{proof}

\subsection{Proof of Proposition~\ref{prop:lipschitz_gnn_output}}
\begin{proof}
We follow the proof of Lemma~5.16 in \cite{maskey2023transferability} (Appendix~A.8) and adapt it to accommodate arbitrary discrepancies between the two inputs/graphons.

\noindent\emph{Step 0: layerwise notation.}
Let $F_0=k$ and $F_L=d$. For $i\in\{1,2\}$, define the layerwise feature maps
$z_i^{(0)}:=x_i\in L^2([0,1];\R^{F_0})$ and for $\ell=1,\dots,L$,
\[
z_i^{(\ell),j}
:=\rho\!\left(\sum_{k=1}^{F_{\ell-1}} m_{\ell}^{jk}\,
h_{\ell}^{jk}(T_{W_i})\, z_i^{(\ell-1),k}\right),
\qquad j=1,\dots,F_\ell,
\]
where $M_\ell=(m_\ell^{jk})_{j,k}$ and $\rho$ is contractive (1-Lipschitz).

Let $M:=\max_{\ell}\|M_\ell\|_\infty$.
By Assumption~\ref{ass:lipschitz_gnn}(b), $\|T_{W_i}\|_{2\to2}\le\Gamma$.
Moreover, since each filter $h$ satisfies $h\in C^1([-\Gamma,\Gamma])$ with Lipschitz continuous derivative,
the functional calculus bound (cf.\ \cite{maskey2023transferability} Eq. (22)) yields that there exists a constant
$C_h>0$ such that for all self-adjoint operators $A,B$ with spectrum in $[-\Gamma,\Gamma]$,
\[
\|h(A)-h(B)\|_{2\to2}\le C_h \|A-B\|_{2\to2}.
\]
Let $C:=\max_{\ell,j,k} C_{h_\ell^{jk}}$.

\noindent\emph{Step 1: uniform bound on feature norms.}
Using $\|\rho\|_{\mathrm{Lip}}\le 1$, $\|h_\ell^{jk}(T_{W_i})\|_{2\to2}\le \|h_\ell^{jk}\|_\infty\le 1$,
and $\|M_\ell\|_\infty\le M$, we have for each $\ell$,
\[
\max_{j\le F_\ell}\|z_i^{(\ell),j}\|_{L^2}
\le M \max_{k\le F_{\ell-1}}\|z_i^{(\ell-1),k}\|_{L^2}
\le M^\ell \|x_i\|_{L^2}.
\]
Under the normalization $\|x_i\|_{L^2}\le 1$, this gives
$\max_{j\le F_\ell}\|z_i^{(\ell),j}\|_{L^2}\le M^\ell$ for $i=1,2$.

\noindent\emph{Step 2: perturbation recursion.}
Define $R_\ell:=\max_{j\le F_\ell}\|z_1^{(\ell),j}-z_2^{(\ell),j}\|_{L^2}$.
By contractivity of $\rho$ and triangle inequality, for each $\ell$,
\begin{align*}
R_\ell
&\le \max_{j}\left\|\sum_k m_\ell^{jk}\Big(
h_\ell^{jk}(T_{W_1}) z_1^{(\ell-1),k} - h_\ell^{jk}(T_{W_2}) z_2^{(\ell-1),k}
\Big)\right\|_{L^2}\\
&\le M \max_k \| h_\ell^{jk}(T_{W_1}) z_1^{(\ell-1),k} - h_\ell^{jk}(T_{W_2}) z_2^{(\ell-1),k}\|_{L^2}\\
&\le M \max_k\Big(
\|(h_\ell^{jk}(T_{W_1})-h_\ell^{jk}(T_{W_2})) z_1^{(\ell-1),k}\|_{L^2}
+ \|h_\ell^{jk}(T_{W_2})(z_1^{(\ell-1),k}-z_2^{(\ell-1),k})\|_{L^2}
\Big)\\
&\le M\Big( C\|T_{W_1}-T_{W_2}\|_{2\to2}\cdot \max_k\|z_1^{(\ell-1),k}\|_{L^2}
+ R_{\ell-1}\Big).
\end{align*}
Using Step~1, $\max_k\|z_1^{(\ell-1),k}\|_{L^2}\le M^{\ell-1}$, we obtain
\[
R_\ell \le M R_{\ell-1} + C M^\ell \|T_{W_1}-T_{W_2}\|_{2\to2}.
\]
Unrolling this recursion and noting $R_0=\|x_1-x_2\|_{L^2}$ yields
\[
R_L \le M^L\|x_1-x_2\|_{L^2} + (CL)M^L\|T_{W_1}-T_{W_2}\|_{2\to2}.
\]

\noindent\emph{Step 3: conclude the $L^2$ output bound.}
Finally,
\[
\|\Phi_{W_1}(x_1)-\Phi_{W_2}(x_2)\|_{L^2([0,1];\R^{F_L})}
\le \sqrt{F_L}\, R_L
\le C_\Phi\Big(\|T_{W_1}-T_{W_2}\|_{2\to2}+\|x_1-x_2\|_{L^2}\Big),
\]
where $C_\Phi := \sqrt{F_L}\,M^L\max\{1,CL\}$ depends only on the architecture/filter family in $\Phi$.
\end{proof}

\subsection{Proof of Theorem~\ref{thm:nn_pe_stability}}
\begin{proof}[Proof of (a)]
Let $\lambda$ be the uniform probability measure on $[0,1]$.
Consider the coupling $\Gamma$ between $\mu(t_G)$ and $\mu(t_W)$ defined as the law of
\[
\big(S\pi t_G(U),\, t_W(U)\big)\quad \text{with }U\sim \lambda.
\]
Then $\Gamma\in\Pi(\mu(t_G),\mu(t_W))$, hence by the definition of $W_1$,
\begin{align*}
W_1\big(\mu(t_G),\mu(t_W)\big)
&\le \int_{[0,1]}\big\|S\pi t_G(u)-t_W(u)\big\|_2\,d\lambda(u) \\
&= \big\|S\pi t_G-t_W\big\|_{L^1([0,1];\R^k)}
\;\le\; \big\|S\pi t_G-t_W\big\|_{L^2([0,1];\R^k)}.
\end{align*}
By the assumed PE stability bound,
$\|S\pi t_G-t_W\|_{L^2([0,1];\R^k)}\le C_{\mathrm{PE}}\varepsilon$, so
\[
W_1\big(\mu(t_G),\mu(t_W)\big)\le C_{\mathrm{PE}}\varepsilon.
\]
Finally, applying Proposition~\ref{prop:lipschitz_deepsets_output} yields
\[
\big|F_\theta(\mu(t_G))-F_\theta(\mu(t_W))\big|
\;\le\;
L_\rho L_\phi\, W_1\big(\mu(t_G),\mu(t_W)\big)
\;\le\;
L_\rho L_\phi\, C_{\mathrm{PE}}\,\varepsilon.
\]
\end{proof}

\begin{proof}[Proof of (b)]
Apply Proposition~\ref{prop:lipschitz_gnn_output} to the pair
\[
(W_1,x_1)=(W_{\pi(\Delta)},\, S\pi t_G),
\qquad
(W_2,x_2)=(W,\, t_W).
\]
Then
\[
\|\Phi_{W_{\pi(\Delta)}}(S\pi t_G)-\Phi_W(t_W)\|_{L^2([0,1];\R^d)}
\le
C_\Phi\Big(
\|T_{W_{\pi(\Delta)}}-T_W\|_{L^2\to L^2}
+\|S\pi t_G-t_W\|_{L^2([0,1];\R^k)}
\Big).
\]
Using $\|T_{W_{\pi(\Delta)}}-T_W\|_{L^2\to L^2}\le \varepsilon$ and the assumed PE stability bound
$\|S\pi t_G-t_W\|_{L^2([0,1];\R^k)}\le C_{\mathrm{PE}}\varepsilon$, we obtain
\[
\|\Phi_{W_{\pi(\Delta)}}(S\pi t_G)-\Phi_W(t_W)\|_{L^2([0,1];\R^d)}
\le
C_\Phi(1+C_{\mathrm{PE}})\,\varepsilon.
\]
Thus the claim holds with $C:=C_\Phi(1+C_{\mathrm{PE}})$.
\end{proof}

\subsection{Proof of Lemma~\ref{lem:nn_pe_stability_graphon_graphon}}
\begin{lemma}[Graphon transferability]
Let $W,W'\!$ be two (bounded, symmetric) graphons with bounded self-adjoint operators
$T_W,T_{W'}$ on $L^2([0,1])$.
Assume there exists a measure-preserving bijection $\pi:[0,1]\to[0,1]$ such that
\[
\bigl\|T_{W^{\pi}}-T_{W'}\bigr\|_{L^2\to L^2}\le \varepsilon,
\qquad
W^{\pi}(u,v):=W(\pi(u),\pi(v)).
\]
Assume the corresponding PE stability bound holds as constructed in Lemma~\ref{lem:graphon_graphon_pe_stability}:
\[
\|t_{W^{\pi}}-t_{W'}\|_{L^2([0,1];\R^{k})}\le C_{\mathrm{PE}}\,\varepsilon.
\]
Then:
\begin{enumerate}[label=(\alph*)]
\item \textbf{DeepSets on PE-token measures.}
Let $F_\theta$ satisfy Assumption~\ref{ass:lipschitz_deepsets}, and define
\[
\mu(t_{W^{\pi}}):=(t_{W^{\pi}})_\#\lambda,\qquad \mu(t_{W'}):=(t_{W'})_\#\lambda,
\]
where $\lambda$ is the uniform probability measure on $[0,1]$.
Then
\[
\bigl|F_\theta(\mu(t_{W^{\pi}})) - F_\theta(\mu(t_{W'}))\bigr|
\;\le\; L_\rho L_\phi\, C_{\mathrm{PE}}\,\varepsilon.
\]

\item \textbf{GNN on graphons with PE signals.}
Let $\Phi$ satisfy Assumption~\ref{ass:lipschitz_gnn}. Then
\[
\bigl\|\Phi_{W^{\pi}}(t_{W^{\pi}}) - \Phi_{W'}(t_{W'})\bigr\|_{L^2([0,1];\R^d)}
\;\le\; C_\Phi\,(1+C_{\mathrm{PE}})\,\varepsilon.
\]
\end{enumerate}
\end{lemma}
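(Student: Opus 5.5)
The plan mirrors the proof of Theorem~\ref{thm:nn_pe_stability}, with the step-graphon/graphon pair $(W_{\pi(\Delta)},W)$ replaced by the graphon pair $(W^{\pi},W')$. Both parts rest on results already established: Proposition~\ref{prop:lipschitz_deepsets_output} for the set backbone and Proposition~\ref{prop:lipschitz_gnn_output} for the message-passing backbone. In each case the assumed PE bound $\|t_{W^{\pi}}-t_{W'}\|_{L^2}\le C_{\mathrm{PE}}\varepsilon$ is fed in.

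For part (a), I build an explicit coupling of the two token measures. Let $U\sim\lambda$ and consider the law of $\bigl(t_{W^{\pi}}(U),t_{W'}(U)\bigr)$. Its marginals are $(t_{W^{\pi}})_\#\lambda=\mu(t_{W^{\pi}})$ and $(t_{W'})_\#\lambda=\mu(t_{W'})$, so it belongs to $\Pi\bigl(\mu(t_{W^{\pi}}),\mu(t_{W'})\bigr)$. The primal definition of $\Wone$ then gives
\[
\Wone\bigl(\mu(t_{W^{\pi}}),\mu(t_{W'})\bigr)\le \int_0^1\|t_{W^{\pi}}(u)-t_{W'}(u)\|_2\,du\le \|t_{W^{\pi}}-t_{W'}\|_{L^2},
\]
where the last step is Cauchy--Schwarz on the probability space $[0,1]$. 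Combining this with the PE bound and Proposition~\ref{prop:lipschitz_deepsets_output} yields $L_\rho L_\phi C_{\mathrm{PE}}\varepsilon$. Both token measures have finite first moment because the tokens lie in $L^2\subset L^1$.

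For part (b), I apply Proposition~\ref{prop:lipschitz_gnn_output} with $(W_1,x_1)=(W^{\pi},t_{W^{\pi}})$ and $(W_2,x_2)=(W',t_{W'})$. This gives the bound $C_\Phi\bigl(\|T_{W^{\pi}}-T_{W'}\|+\|t_{W^{\pi}}-t_{W'}\|_{L^2}\bigr)\le C_\Phi(1+C_{\mathrm{PE}})\varepsilon$.

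Before applying the proposition, its hypotheses must be checked, and I expect this to be the only delicate point. The first is the bounded-operator condition of Assumption~\ref{ass:lipschitz_gnn}(b) for $T_{W^{\pi}}$. Since $\pi$ is measure preserving, $T_{W^{\pi}}=P_\pi^{*}T_WP_\pi$ with $P_\pi f=f\circ\pi$ unitary, so $T_{W^{\pi}}$ is self-adjoint with the same spectrum and norm as $T_W$, hence still at most $\Gamma$. The second is the normalization $\|x_i\|_{L^2}\le1$. For Eig-PE the tokens have norm $\sqrt{k}$ and for Proj-PE at most $B$, so either I assume normalized tokens as in Proposition~\ref{prop:lipschitz_stability_informal}, or I rescale. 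The recursion in Proposition~\ref{prop:lipschitz_gnn_output} is linear in $\max\|x_i\|$, so dropping the normalization only multiplies the constant by $\max\{1,\|x_1\|\}$, which can be absorbed into $C_\Phi$.
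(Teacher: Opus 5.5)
Your proposal is correct and is essentially the paper's proof. The paper simply reruns Theorem~\ref{thm:nn_pe_stability} with $(W_{\pi(\Delta)},S\pi t_G)\to(W^{\pi},t_{W^{\pi}})$ and $(W,t_W)\to(W',t_{W'})$: the synchronous coupling with $\Wone\le\|\cdot\|_{L^1}\le\|\cdot\|_{L^2}$ gives (a), and Proposition~\ref{prop:lipschitz_gnn_output} gives (b). Your extra hypothesis checks go beyond the paper's one-line proof and are sound: Eig-PE tokens (norm $\sqrt{k}$) do violate the unit-norm input condition of Proposition~\ref{prop:lipschitz_gnn_output}, and absorbing $\max\{1,\|x_1\|_{L^2}\}$ into the constant is the right repair. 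One harmless slip in the relabeling check: with $P_\pi f=f\circ\pi$ the correct identity is $T_{W^{\pi}}=P_\pi T_W P_\pi^{*}$, not $P_\pi^{*}T_W P_\pi$, but both are unitary conjugations, so your conclusion stands.
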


\begin{proof}
The claim follows by repeating the arguments in Theorem~\ref{thm:nn_pe_stability}, replacing
$(W_{\pi(\Delta)},S\pi t_G)$ by $(W^\pi,t_{W^\pi})$ and $(W,t_W)$ by $(W',t_{W'})$, and using the
graphon-to-graphon PE stability bound from Lemma~\ref{lem:graphon_graphon_pe_stability}.
\end{proof}

\subsection{Proof of Theorem~\ref{thm:graph_graph_decomposition}}
\begin{theorem}[Graph transferability via error decomposition]
Let $\Delta^{(1)}\in\R^{n_1\times n_1}$ and $\Delta^{(2)}\in\R^{n_2\times n_2}$ be two finite graph operators,
sampled (possibly with different sizes) from two underlying graphons $W^{(1)}$ and $W^{(2)}$, respectively.
Let $\pi_1,\pi_2$ be node permutations and consider the induced step-graphons
$W_{\pi_1(\Delta^{(1)})}$ and $W_{\pi_2(\Delta^{(2)})}$ with operators
$T_{W_{\pi_1(\Delta^{(1)})}}$ and $T_{W_{\pi_2(\Delta^{(2)})}}$.

Assume the following three discrepancies are bounded:
\[
\begin{split}
\varepsilon_1 &:= \bigl\|T_{W_{\pi_1(\Delta^{(1)})}}-T_{W^{(1)}}\bigr\|_{L^2\to L^2},\\
\varepsilon_2 &:= \bigl\|T_{W_{\pi_2(\Delta^{(2)})}}-T_{W^{(2)}}\bigr\|_{L^2\to L^2}.
\end{split}
\]
and there exists a measure-preserving bijection $\pi:[0,1]\to[0,1]$ such that
\[
\varepsilon_{\mathrm{gra}} := \bigl\|T_{(W^{(1)})^{\pi}}-T_{W^{(2)}}\bigr\|_{L^2\to L^2}.
\]
Let $t_{G^{(1)}},t_{G^{(2)}}$ be the (discrete) graph PE token maps, and $t_{W^{(1)}},t_{W^{(2)}}$
the corresponding graphon PE maps (same PE type throughout).
Assume the PE stability bounds in Lemma~\ref{lem:graphon_graphon_pe_stability} hold with constants
$C_{\mathrm{PE}}^{(1)}$ and $C_{\mathrm{PE}}^{(2)}$, and define
\[
C_{\mathrm{PE}} \;:=\; \max\big\{C_{\mathrm{PE}}^{(1)},\,C_{\mathrm{PE}}^{(2)}\big\}.
\]
Namely,
\[
\|S{\pi_1}t_{G^{(1)}}-t_{W^{(1)}}\|_{L^2}\le C_{\mathrm{PE}}\,\varepsilon_1,\;\]
\[\|S{\pi_2}t_{G^{(2)}}-t_{W^{(2)}}\|_{L^2}\le C_{\mathrm{PE}}\,\varepsilon_2,
\]
and
\[
\|t_{(W^{(1)})^{\pi}}-t_{W^{(2)}}\|_{L^2}\le C_{\mathrm{PE}}\,\varepsilon_{\mathrm{gra}}.
\]
Then:
\begin{enumerate}[label=(\alph*)]
\item \textbf{DeepSets discrepancy decomposition.}
Let $F_\theta$ satisfy Assumption~\ref{ass:lipschitz_deepsets}, and define
\[
\mu^{(1)} := (S{\pi_1}t_{G^{(1)}})_\#\lambda,\qquad
\mu^{(2)} := (S{\pi_2}t_{G^{(2)}})_\#\lambda.
\]
Then
\[
\bigl|F_\theta(\mu^{(1)})-F_\theta(\mu^{(2)})\bigr|
\;\le\;
L_\rho L_\phi\, C_{\mathrm{PE}}\,
\bigl(\varepsilon_1+\varepsilon_{\mathrm{gra}}+\varepsilon_2\bigr).
\]

\item \textbf{GNN discrepancy decomposition.}
Let $\Phi$ satisfy Assumption~\ref{ass:lipschitz_gnn}. Then
\[
\begin{aligned}
&\Bigl\|\Phi_{W_{\pi_1(\Delta^{(1)})}}\!\big(S_{\pi_1}t_{G^{(1)}}\big)\circ \pi
-\Phi_{W_{\pi_2(\Delta^{(2)})}}\!\big(S_{\pi_2}t_{G^{(2)}}\big)\Bigr\|_{L^2([0,1];\mathbb{R}^d)} \\
&\le \quad C_\Phi\,(1+C_{\mathrm{PE}})\,
\bigl(\varepsilon_1+\varepsilon_{\mathrm{gra}}+\varepsilon_2\bigr).
\end{aligned}
\]
\end{enumerate}
\end{theorem}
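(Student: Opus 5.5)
The plan is a triangle-inequality argument that passes through the two latent graphons. Both parts use one chain of intermediate objects:
\[
S\pi_1 t_{G^{(1)}} \;\to\; t_{W^{(1)}} \;\to\; t_{(W^{(1)})^\pi} \;\to\; t_{W^{(2)}} \;\to\; S\pi_2 t_{G^{(2)}}.
\]
The key bookkeeping fact is that relabeling by a measure-preserving bijection is harmless. The graphon $(W^{(1)})^\pi$ is handled by composition with $\pi$: its operator is $T_{(W^{(1)})^\pi}=P_\pi^{-1}T_{W^{(1)}}P_\pi$ with $P_\pi f=f\circ\pi$ unitary on $L^2$. Consequently its eigenfunctions are $\phi_\ell\circ\pi$, so $t_{(W^{(1)})^\pi}=t_{W^{(1)}}\circ\pi$, for both Eig-PE and Proj-PE (for Proj-PE, with the readout transported as $r\circ\pi$). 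Likewise the pushforward measure $(t\circ\pi)_\#\lambda$ equals $t_\#\lambda$.

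For part (a), I will bound $|F_\theta(\mu^{(1)})-F_\theta(\mu^{(2)})|$ by $L_\rho L_\phi\,\Wone(\mu^{(1)},\mu^{(2)})$ via Proposition~\ref{prop:lipschitz_deepsets_output}. I then split $\Wone$ by the triangle inequality for $\Wone$ through the intermediate measures $(t_{W^{(1)}})_\#\lambda=(t_{(W^{(1)})^\pi})_\#\lambda$ and $(t_{W^{(2)}})_\#\lambda$. Each of the three pieces is bounded by an $L^2$ distance of token maps, using the synchronous coupling $U\mapsto(a(U),b(U))$ exactly as in the proof of Theorem~\ref{thm:nn_pe_stability}(a), together with $\|\cdot\|_{L^1}\le\|\cdot\|_{L^2}$ on a probability space. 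The three assumed PE bounds then give $C_{\mathrm{PE}}\varepsilon_1$, $C_{\mathrm{PE}}\varepsilon_{\mathrm{gra}}$ and $C_{\mathrm{PE}}\varepsilon_2$. The middle step is where measure invariance matters: the relabeling costs nothing at the level of measures.

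For part (b), write $\Psi_1:=\Phi_{W_{\pi_1(\Delta^{(1)})}}(S\pi_1 t_{G^{(1)}})$ and $\Psi_2$ analogously. I insert the intermediate outputs $\Phi_{W^{(1)}}(t_{W^{(1)}})\circ\pi$ and $\Phi_{W^{(2)}}(t_{W^{(2)}})$ and use the triangle inequality in $L^2$; there are three terms.
\begin{enumerate}
\item The first term is $\|\Psi_1\circ\pi-\Phi_{W^{(1)}}(t_{W^{(1)}})\circ\pi\|=\|\Psi_1-\Phi_{W^{(1)}}(t_{W^{(1)}})\|$, since composition with $\pi$ is an isometry. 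Theorem~\ref{thm:nn_pe_stability}(b) bounds it by $C_\Phi(1+C_{\mathrm{PE}})\varepsilon_1$.
\item The middle term needs the equivariance identity $\Phi_{W^{(1)}}(t_{W^{(1)}})\circ\pi=\Phi_{(W^{(1)})^\pi}(t_{W^{(1)}}\circ\pi)$. This holds layerwise because $h(T_{W^\pi})P_\pi=P_\pi h(T_W)$ by functional calculus under unitary conjugation, and because pointwise $\rho$ and feature mixing commute with $P_\pi$. With this identity, Lemma~\ref{lem:nn_pe_stability_graphon_graphon}(b) gives $C_\Phi(1+C_{\mathrm{PE}})\varepsilon_{\mathrm{gra}}$.
\item The third term is Theorem~\ref{thm:nn_pe_stability}(b) again and gives $C_\Phi(1+C_{\mathrm{PE}})\varepsilon_2$.
\end{enumerate}
Summing the three bounds yields the claim. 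The normalization $\|x_i\|_{L^2}\le 1$ required in Proposition~\ref{prop:lipschitz_gnn_output} should be checked for each input. For Eig-PE it holds up to a $\sqrt{k}$ factor (absorbed into $C_\Phi$), and for Proj-PE it holds because the tokens are bounded by $B$.

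I expect the main obstacle to be the relabeling and sign/basis consistency in the middle step. Eigenfunctions are defined only up to sign (and up to basis within a subspace), so $t_{(W^{(1)})^\pi}$ must be chosen to be the same representative as $t_{W^{(1)}}\circ\pi$, not an independent choice. My first attempt is to fix conventions once: define $t_{(W^{(1)})^\pi}:=t_{W^{(1)}}\circ\pi$, which is a legitimate eigenbasis choice by the unitary conjugation identity. I would then read the assumed bound $\|t_{(W^{(1)})^\pi}-t_{W^{(2)}}\|\le C_{\mathrm{PE}}\varepsilon_{\mathrm{gra}}$, and the two sampling bounds, as holding for these Davis--Kahan-aligned choices, as in Corollary~\ref{cor:pe_stability}. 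Once the conventions are fixed, every step is a direct invocation of earlier results.
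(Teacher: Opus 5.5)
Your proposal is correct and matches the paper's proof: a triangle inequality through the latent graphons, using Theorem~\ref{thm:nn_pe_stability} twice and Lemma~\ref{lem:nn_pe_stability_graphon_graphon} once. You also make explicit the relabeling facts the paper leaves implicit, namely that $f\mapsto f\circ\pi$ is an isometry, that $\Phi$ is equivariant, that the pushforward measure is invariant, and the convention $t_{(W^{(1)})^\pi}=t_{W^{(1)}}\circ\pi$. One small slip: with $P_\pi f=f\circ\pi$ one has $T_{(W^{(1)})^\pi}=P_\pi T_{W^{(1)}}P_\pi^{-1}$, not $P_\pi^{-1}T_{W^{(1)}}P_\pi$ (the latter equals $T_{(W^{(1)})^{\pi^{-1}}}$), although the consequences you draw from it, eigenfunctions $\phi_\ell\circ\pi$ and $h(T_{W^\pi})P_\pi=P_\pi h(T_W)$, are the correct ones.
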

\begin{proof}
Both parts follow from a triangle inequality with two applications of Theorem~\ref{thm:nn_pe_stability}
(graph-to-graphon) and one application of Lemma~\ref{lem:nn_pe_stability_graphon_graphon}
(graphon-to-graphon), using the three discrepancies
$\varepsilon_1,\varepsilon_{\mathrm{gra}},\varepsilon_2$.
\end{proof}

\section{Implementation Details}
\subsection{Controlled Task Settings}
\label{app:controlled_settings_details}

\paragraph{Fourier graphons and boundedness.}
We use Fourier basis functions $\phi_{m,c}(x)=\sqrt{2}\cos(2\pi m x)$ and $\phi_{m,s}(x)=\sqrt{2}\sin(2\pi m x)$, hence
$|\phi_r(x)|\le \sqrt{2}$ and
\[
\Big|\sum_{r=1}^{R}\lambda_r\,\phi_r(x)\phi_r(y)\Big|
\le 2\sum_{r=1}^{R}|\lambda_r|.
\]
A sufficient condition for $W(x,y)\in[0,1]$ for all $x,y$ is
\[
2\sum_{r=1}^{R}|\lambda_r|\le \min(\rho,1-\rho),
\quad\text{equivalently}\quad
\sum_{r=1}^{R}|\lambda_r|\le \tfrac12\min(\rho,1-\rho).
\]
We set $\rho=0.5$ and use $R=8$ Fourier terms, with coefficients $\lambda_r$ sampled from $[-0.25,0.25]$.

  \paragraph{Dataset construction.}
  We use the fourier graphon family with $C=8$ classes, $\rho{=}0.5$, and $
  \text{num\_terms}{=}8$ (coeff\_scale $=0.25$ in config). Training sizes are
  ${128,256,512,1024}$ and test sizes are ${128,256,512,1024,2048}$. The 100k node budget is
  applied \emph{per class} by cycling through sizes and allocating graphs until the remaining
  budget is smaller than the next size. Let $\lambda$ be the large-graph budget fraction and
  $(1{-}\lambda)$ the small-graph fraction; the allocation cycles over (128,256) for the small
  budget and (512,1024) for the large budget. Train/Val split is a global
  80/20 random split (not stratified by size).
  The test set uses 5 graphs per class per size, i.e., $8\times5=40$ graphs at each of
  128/256/512/1024/2048 (total 200). 

  \paragraph{Models and optimization.}
  We use DeepSets with $\phi$ and $\rho$ as 2-layer MLPs: $\phi$: [in\_dim, 256, 256], $\rho$:
  [256, 256, 8], ReLU activations, mean pooling over nodes, no dropout. Positional encoding is
  eig (PE kind = eig) with $k{=}32$; eigenvectors of the normalized shift operator $\Delta$
  are sorted and scaled by $\sqrt{n}$, and we add $10^{-4}I$ in the GPU batch
  eigendecomposition for stability. Training uses Adam with lr $5{\times}10^{-4}$, weight
  decay $10^{-4}$, for 100 epochs on NVIDIA RTX A6000. Tokens are precomputed once per graph (so the
  effective SGD batch size is 1 per update in this path. Randomness is controlled by NumPy’s RNG with seed 0 for data generation and PE seed
  0.

\paragraph{Token discrepancy metric.}
For each graph $G$ with tokens $\{t_i^{(G)}\}_{i\in V(G)}\subset\R^k$, define
\[
\mu_G:=|V(G)|^{-1}\sum_{i\in V(G)}\delta_{t_i^{(G)}},\qquad
\mu_{\mathcal S}:=|\mathcal S|^{-1}\sum_{G\in\mathcal S}\mu_G.
\]
We compute the train--test discrepancy by
\[
\Disc_{\mathrm{set}}(\mathcal S_{\mathrm{tr}},\mathcal S_{\mathrm{te}})
:=\Wone(\mu_{\mathcal S_{\mathrm{tr}}},\mu_{\mathcal S_{\mathrm{te}}}),
\]
estimated via node subsampling (without replacement) and sliced Wasserstein with random projections
(specify $m$, number of projections, and repetitions).

\subsection{Learnable Proj-PE (SPE) implementation.}

\label{app:spe_impl}

\paragraph{SPE.}
Beyond the hard projector, \emph{Stable and Expressive Positional Encodings} (SPE)~\cite{huang2023stability} can be viewed as a more expressive projector-style variant: rather than using only the hard top-$k$ projector $U_kU_k^\top$,
it applies an \emph{eigenvalue-equivariant} (basis-consistent)
transformation in the spectral domain. Abstractly, a broad class of such constructions can be written as
\[
t^{\mathrm{spe}}(W)\;\approx\;U\,G(\Lambda)\,U^\top r,
\]
where $G(\Lambda)$ is an eigenvalue-dependent operator (e.g., diagonal weighting or low-rank mixing in the eigenbasis) and $r$ is a probe/readout (simple pooling corresponds to fixing $r$, such as an all-ones vector). Compared with projecting onto $\mathrm{span}(U_k)$, this eigenvalue-aware mixing can preserve finer information about the
spectral distribution (e.g., soft frequency partitioning), which may help when different graphons have similar leading eigenspaces but differ in their eigenvalue profiles. \\

In \S\ref{PE_sweep}, we adopt an SPE-style module as the learnable Proj-PE variant.
Given the normalized shift operator $\Delta\in\mathbb{R}^{n\times n}$ for each graph, we compute the top-$k$ eigendecomposition
$\Delta=V\,\mathrm{diag}(\Lambda)\,V^\top$ with $V\in\mathbb{R}^{n\times k}$ and $\Lambda\in\mathbb{R}^k$.
We then learn a bank of $m$ eigenvalue-dependent filters $\{\psi_j\}_{j=1}^m$ (sigmoid gates with trainable parameters applied to per-graph normalized eigenvalues) and use them to mix the top-$k$ eigenvectors.
Concretely, we form $Z(\Lambda)\in\mathbb{R}^{k\times m}$ with entries $Z_{\ell j}=\psi_j(\lambda_\ell)$ and output
\(
t_G^{\mathrm{spe}}=VZ(\Lambda)\in\mathbb{R}^{n\times m},
\)
optionally followed by a lightweight channel-wise MLP. All SPE parameters are trained jointly with the downstream DeepSets backbone.

\section{Additional Experiments}

\subsection{Eig-PE's Effect on Generalization with GIN Backbone}\label{app:exp_gin_gen}

We further test whether the expressivity--stability trade-off observed in
Section~\ref{subsec:exp_pe_gen} is specific to the DeepSets backbone. To this end, we
replace the downstream predictor with GIN~\cite{xu2019how} while keeping the same
Eig-PE construction. Figure~\ref{fig:gin_eigpe-k} shows that GIN exhibits the
same qualitative pattern: small $k$ gives high error due to limited positional
expressivity, intermediate $k$ achieves the best test performance, and larger
$k$ degrades again, especially on the OOD-size subset.

This trend is consistent with the gap-sensitive PE-stability bound
$C_{\mathrm{eig}}\propto \sqrt{k}\max_{\ell\le k}\gamma_\ell^{-1}$
from Lemma~\ref{lem:graphon_graphon_pe_stability}. Increasing $k$ enriches the
spectral tokens, but also exposes the model to smaller eigengaps and therefore
larger eigenvector-token perturbations under sampling noise and graph variation.
The GIN result suggests that this phenomenon is not an artifact of the
permutation-invariant DeepSets backbone; it also appears when the predictor uses
message passing.

We restrict the GIN sweep to $k\le 64$. For larger $k$, many GIN runs become
optimization-limited and occasionally collapse to near-chance accuracy. We
therefore exclude those runs from this PE-stability analysis, since they reflect
a separate training-stability issue rather than the eigengap-driven mechanism
studied here.

\begin{figure}
    \centering
    \includegraphics[width=0.6\linewidth]{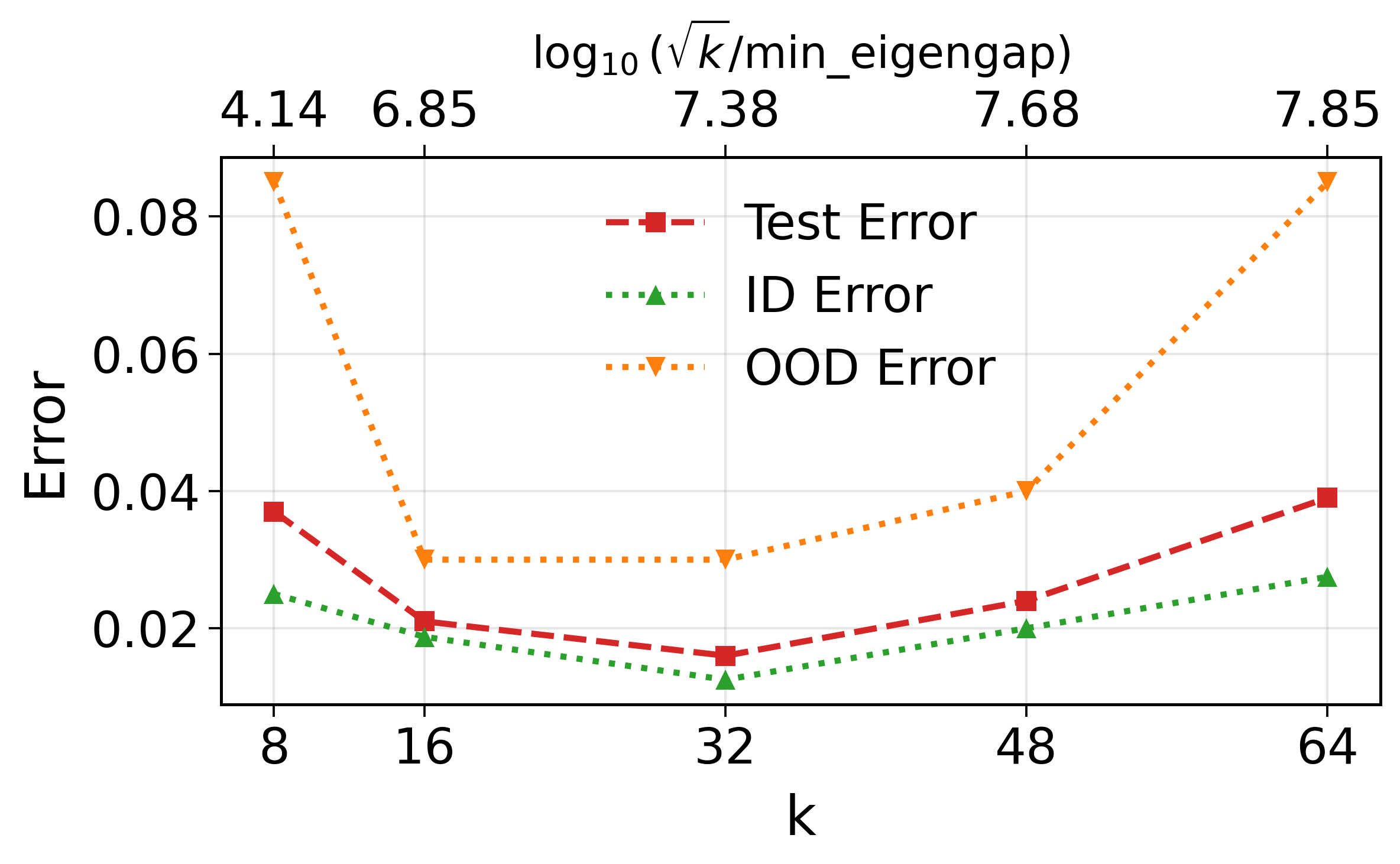}
    \caption{
\textbf{GIN + Eig-PE: test error vs. $k$ with a message-passing backbone.}
Mean over 5 independent runs. The GIN backbone exhibits the same qualitative
expressivity--stability trade-off as in the DeepSets experiments: small $k$
underfits, intermediate $k$ performs best, and larger $k$ degrades again,
especially on the OOD-size subset. The top axis reports
$\log_{10}(\sqrt{k}/\mathrm{min\_eigengap})$, which increases with $k$ and
reflects the eigengap-sensitive stability cost in
Lemma~\ref{lem:graphon_graphon_pe_stability}.
}
    \vspace{-5mm}
    \label{fig:gin_eigpe-k}
\end{figure}

\subsection{Empirical Validation of the Lipschitz Assumption on Pretrained GFMs}
\label{app:lipschitz_gfm}

Our decomposition (Theorem~\ref{thm:graph_graph_decomposition}) assumes an $L_\theta$-Lipschitz backbone. We verify this holds with bounded, stable constants on two real pretrained GFMs:
\textbf{AnyGraph}~\cite{xia2024anygraph} (an MLP-based mixture-of-experts model) and
\textbf{GFT}~\cite{wang2024gft} (a GraphSAGE-based model), evaluated across real-world datasets spanning academic, e-commerce, social, and P2P domains (2.7K--144K nodes). For a differentiable model $f$, the local Lipschitz constant at a point $x$ equals the spectral norm of its Jacobian, $\|J_f(x)\|_2=\sup_{\|v\|=1}\|J_f(x)v\|$, and the global constant is
$L=\sup_x\|J_f(x)\|_2$. We estimate $\|J_f(x)\|_2$ by sampling $30$ random unit directions $v_k$
and computing $\max_k\|f(x+\epsilon v_k)-f(x)\|/\epsilon$ (a lower bound that tightens with more samples), repeated at $10$ nearby points per dataset to assess stability.

For AnyGraph, the estimated constant is $L\approx 4.4$ with per-point std $0.02$--$0.06$; for GFT, $L\approx 20$ with std $\le 0.16$. These constants are stable across heterogeneous domains
(cross-dataset std $=0.43$ for AnyGraph), do not grow with graph scale (Amazon-book at 144K nodes yields $L=3.97$, lower than Cora at 2.7K), and are measured on pretrained backbones \emph{without}
fine-tuning---the hardest setting, since fine-tuning can only further regularize. The low variance across diverse inputs indicates the estimate is tight within the data distribution, supporting that the Lipschitz assumption---standard across prior graphon transfer theory~\cite{ruiz2020graphon,maskey2023transferability,keriven2020convergence}---holds in practice for real GFM checkpoints.


\end{document}